\titleformat*{\section}{\large\bfseries}
\titleformat*{\subsection}{\normalsize\bfseries}
\titleformat*{\subsubsection}{\small\bfseries}
\newcolumntype{L}[1]{>{\raggedright\arraybackslash}p{#1}}
\newcolumntype{C}[1]{>{\centering\arraybackslash}p{#1}}
\newcolumntype{R}[1]{>{\raggedleft\arraybackslash}p{#1}}
\def\defn{\,\triangleq\,}
\def\sgn{\mathsf{sgn}}
\def\uin{{u_{\text{\tiny in}}}}
\def\ubfin{{\mathbf{u}_{\text{\tiny in}}}}
\def\rbm{{\bm{r}}}
\def\sbm{{\bm{s}}}
\def\xbm{{\bm{x}}}
\def\ebm{{\bm{e}}}
\def\ybm{{\bm{y}}}
\def\zbm{{\bm{z}}}
\def\zerobm{{\bm{0}}}
\def\Ibm{{\bm{I}}}
\def\Hbm{{\bm{H}}}
\def\Dbm{{\bm{D}}}
\def\Isf{\mathsf{I}}
\def\Psf{\mathsf{P}}
\def\Tsf{\mathsf{T}}
\def\Fsf{\mathsf{F}}
\def\Gsf{\mathsf{G}}
\def\Psfhat{\widehat{\mathsf{P}}}
\def\xbmhat{{\widehat{\bm{x}}}}
\def\nablahat{{\hat{\nabla}}}
\def\xbmast{{\bm{x}^\ast}}
\def\zbmast{{\bm{z}^\ast}}
\def\sbmast{{\bm{s}^\ast}}
\def\Sbf{{\mathbf{S}}}
\def\R{\mathbb{R}}
\def\N{\mathbb{N}}
\def\E{\mathbb{E}}
\def\C{\mathbb{C}}
\def\argmin{\mathop{\mathsf{arg\,min}}}
\def\proposed{\text{PnP-SGD}}
\def\fix{\mathsf{fix}}
\def\prox{\mathsf{prox}}
\def\denoise{\mathsf{denoise}}
\def\Dsf{\mathsf{D}}
\newtheorem{proposition}{Proposition}
\newtheorem{definition}{Definition}
\newtheorem{corollary}{Corollary}
\newtheorem{assumption}{Assumption}
\begin{document}

\title{\Large An Online Plug-and-Play Algorithm for Regularized Image Reconstruction}

{\normalsize\author{Yu~Sun$^{\footnotesize 1}$, Brendt~Wohlberg$^{\footnotesize 2}$,~and~Ulugbek~S.~Kamilov$^{\footnotesize 1, 3, \ast}$\\
\emph{\small $^{\footnotesize 1}$Department of Computer Science and Engineering,~Washington University in St.~Louis, MO 63130, USA}\\
\emph{\small $^{\footnotesize 2}$Los Alamos National Laboratory, Theoretical Division, Los Alamos, NM 87545 USA}\\
\emph{\small $^{\footnotesize 3}$Department of Electrical and Systems Engineering,~Washington University in St.~Louis, MO 63130, USA}\\
\small$^{\footnotesize *}$\emph{Email}: \texttt{kamilov@wustl.edu}
}}

\markboth{An Online Plug-and-Play Algorithm for Regularized Image Reconstruction}%
{Kamilov:  An Online Plug-and-Play Algorithm for Regularized Image Reconstruction}

\date{}
\maketitle 

\begin{abstract}
Plug-and-play priors (PnP) is a powerful framework for regularizing imaging inverse problems by using advanced denoisers within an iterative algorithm. Recent experimental evidence suggests that PnP algorithms achieve state-of-the-art performance in a range of imaging applications. In this paper, we introduce a new online PnP algorithm based on the iterative shrinkage/thresholding algorithm (ISTA). The proposed algorithm uses only a subset of measurements at every iteration, which makes it scalable to very large datasets. We present a new theoretical convergence analysis, for both batch and online variants of PnP-ISTA, for denoisers that do not necessarily correspond to proximal operators. We also present simulations illustrating the applicability of the algorithm to image reconstruction in diffraction tomography. The results in this paper have the potential to expand the applicability of the PnP framework to very large and redundant datasets.
\end{abstract}



\section{Introduction}
\label{Sec:Intro}

The reconstruction of an unknown image $\xbm \in \R^n$ from a set of noisy measurements $\ybm \in \R^m$ is one of the most widely studied problems in computational imaging. The task is frequently formulated as an optimization problem
\begin{equation}
\label{Eq:RegularizedOptimization}
\xbmhat = \argmin_{\xbm \in \R^N} \left\{f(\xbm)\right\} \quad\text{with}\quad f(\xbm) = d(\xbm) + r(\xbm),
\end{equation}
where $d$ is the data-fidelity term that penalizes the mismatch to the measurements and $r$ is the regularizer that imposes prior knowledge regarding the unknown image. Some popular imaging priors include nonnegativity, transform-domain sparsity, and self-similarity~\cite{Rudin.etal1992, Figueiredo.Nowak2001, Elad.Aharon2006, Danielyan.etal2012}.

Over the past two decades, a substantial effort has been devoted to combining the best regularizers with efficient optimization algorithms. The large dimensionality of the imaging data and nondifferentiability of many regularizers has led to the widespread adoption of proximal algorithms~\cite{Parikh.Boyd2014}---such as variants of iterative shrinkage/thresholding algorithm (ISTA)~\cite{Figueiredo.Nowak2003, Daubechies.etal2004, Bect.etal2004, Beck.Teboulle2009a} and alternating direction method of multipliers (ADMM)~\cite{Eckstein.Bertsekas1992, Afonso.etal2010, Boyd.etal2011}. These algorithms avoid differentiating the regularizer by using a mathematical concept known as the proximal operator, which is itself an optimization problem equivalent to regularized image denoising.

The mathematical equivalence of the proximal operator to denoising has recently inspired Venkatakrishnan \emph{et al.}~\cite{Venkatakrishnan.etal2013} to introduce the powerful plug-and-play priors (PnP) framework for image reconstruction. The key idea in PnP is to replace the proximal operator in an iterative algorithm with a state-of-the-art image denoiser, such as BM3D~\cite{Dabov.etal2007}, WNNM~\cite{Gu.etal2014}, or TNRD~\cite{Chen.Pock2016}, which does not necessarily have a corresponding regularization objective. This implies that PnP methods generally lose interpretability as optimization problems. Nonetheless, the framework has gained in popularity due to its effectiveness in a range of applications in the context of imaging inverse problems~\cite{Sreehari.etal2016, Chan.etal2016, Brifman.etal2016, Teodoro.etal2016, Zhang.etal2017a, Teodoro.etal2017, Ono2017, Meinhardt.etal2017, Kamilov.etal2017}. In particular, the effectiveness of PnP was demonstrated beyond the original ADMM formulation~\cite{Venkatakrishnan.etal2013} to other proximal algorithms such as primal-dual splitting and ISTA~\cite{Ono2017, Meinhardt.etal2017, Kamilov.etal2017}.

All current PnP algorithms are iterative \emph{batch} procedures, which means that they use the full set of measurements at every iteration. This effectively precludes their application to very large datasets~\cite{Bottou.Bousquet2007} common in three-dimensional (3D) imaging or in imaging of dynamic objects~\cite{Kamilov.etal2016, Degraux.etal2017}. In this paper, we address this limitation by proposing a new \emph{online} extension for PnP-ISTA. By using only a subset of the measurements at a time, the proposed algorithm scales to datasets that would otherwise be prohibitively large for batch processing. More specifically, the key contributions of this paper are as follows.
\begin{itemize}
\item We present a detailed theoretical convergence analysis of batch PnP-ISTA under a set of explicit assumptions. Our analysis complements the recent theoretical results on PnP-ADMM by Sreehari \emph{et al.}~\cite{Sreehari.etal2016} and Chan \emph{et al.}~\cite{Chan.etal2016} in two major ways. We show that for PnP-ISTA the symmetric gradient assumption from~\cite{Sreehari.etal2016} is not necessary, while the bounded denoiser assumption from~\cite{Chan.etal2016} is not sufficient to establish the convergence.

\begin{figure*}
\begin{minipage}[t]{.5\textwidth}
\begin{algorithm}[H]
\caption{$\mathsf{ISTA}$}\label{alg:ista}
\begin{algorithmic}[1]
\State \textbf{input: } $\xbm^0 = \sbm^0 \in \R^n$, $\gamma > 0$, and $\{q_k\}_{k \in \N}$
\For{$k = 1, 2, \dots$}
\State $\zbm^k \leftarrow \sbm^{k-1}-\gamma \nabla d(\sbm^{k-1})$
\State $\xbm^k \leftarrow \prox_{\gamma r}(\zbm^k)$
\State $\sbm^k \leftarrow \xbm^k + ((q_{k-1}-1)/q_k)(\xbm^k-\xbm^{k-1}) $
\EndFor\label{euclidendwhile}
\end{algorithmic}
\end{algorithm}%
\end{minipage}
\hspace{0.25em}
\begin{minipage}[t]{.5\textwidth}
\begin{algorithm}[H]
\caption{$\mathsf{ADMM}$}\label{alg:admm}
\begin{algorithmic}[1]
\State \textbf{input: } $\xbm^0 \in \R^n$, $\sbm^0 = \zerobm$, and $\gamma > 0$
\For{$k = 1, 2, \dots$}
\State $\zbm^k \leftarrow \prox_{\gamma d}(\xbm^{k-1} - \sbm^{k-1})$
\State $\xbm^k \leftarrow \prox_{\gamma r}(\zbm^k + \sbm^{k-1})$
\State $\sbm^k \leftarrow \sbm^{k-1} + (\zbm^k - \xbm^k)$
\EndFor\label{euclidendwhile}
\end{algorithmic}
\end{algorithm}%
\end{minipage}
\end{figure*}

\item We extend the traditional batch PnP framework with our novel online algorithm called $\proposed$. We prove the theoretical convergence of the algorithm to the same set of fixed points as batch PnP-ISTA and PnP-ADMM. This makes $\proposed$ a powerful and theoretically sound alternative for large-scale image reconstruction. We also illustrate its applicability with several numerical simulations on image reconstruction problems encountered in diffraction tomography~\cite{Kak.Slaney1988}.

\end{itemize}


\section{Background}
\label{Sec:Background}

In this section, we provide the background material that forms the foundation to our contributions. We first review the problem of regularized image reconstruction and then introduce more recent results related to the PnP algorithms.

\subsection{Inverse problems in imaging}

Consider the linear inverse problem
\begin{equation}
\ybm = \Hbm\xbm+ \ebm,
\end{equation}
where the goal is to recover $\xbm \in \R^n$ given the measurements $\ybm \in \R^m$. Here, the measurement matrix $\Hbm \in \R^{m \times n}$ models the response of the imaging system and $\ebm \in \R^m$ represents the measurement noise, which is often assumed to be independent and identically distributed (i.i.d.) Gaussian. When the inverse problem is nonlinear, the measurement operator can be generalized to a more general mapping ${\Hbm: \R^{n} \rightarrow \R^m}$ with $\ybm = \Hbm(\xbm) + \ebm$.

Practical inverse problems are often ill-posed, which often leads to the formulation in~\eqref{Eq:RegularizedOptimization}. In such cases, one of the most popular data-fidelity terms is least-squares
\begin{equation}
\label{Eq:LeastSquares}
d(\xbm) = \frac{1}{2}\|\ybm - \Hbm\xbm\|_2^2,
\end{equation}
which imposes an $\ell_2$-penalty on data-fit. Similarly, two common regularizers for images include the spatial sparsity-promoting penalty $r(\xbm) \defn \lambda \|\xbm\|_{1}$ and TV penalty ${r(\xbm) \defn \lambda\|\Dbm\xbm\|_{1}}$, where $\lambda > 0$ is the regularization parameter and $\Dbm$ is the discrete gradient operator~\cite{Rudin.etal1992, Tibshirani1996, Candes.etal2006, Donoho2006}.

Many popular regularizers, such as the ones based on the $\ell_1$-norm, are nondifferentiable. Two common algorithms for working with such regularizers are ISTA and ADMM summarized in Algorithm~\ref{alg:ista} and~\ref{alg:admm}, respectively. The key step for handling nonsmooth regularizers is the proximal operator~\cite{Moreau1965}
\begin{equation}
\label{Eq:ProximalOperator}
\prox_{\gamma r}(\zbm) \defn \argmin_{\xbm \in \R^n} \left\{\frac{1}{2}\|\xbm-\zbm\|_2^2 + \gamma r(\xbm)\right\}.
\end{equation}
According to definition~\eqref{Eq:ProximalOperator}, the proximal operator corresponds to an image denoiser formulated as regularized optimization. Note also that when the values for $\{q_k\}$ in Algorithm~\ref{alg:ista} are adapted as
\begin{equation}
\label{Eq:Accelerate}
q_k \leftarrow \frac{1}{2}\left(1+\sqrt{1+4q_{k-1}^2}\right)
\end{equation}
the algorithm corresponds to the accelerated variant of ISTA, known as fast ISTA (FISTA)~\cite{Beck.Teboulle2009}. On the other hand, when $q_k = 1$ for all $k \in \N$, then one recovers the traditional ISTA. In this paper, we will use the term ISTA to refer to both algorithms, with an understanding that the selection of $\{q_k\}$ acts as a switch between the methods.

A careful inspection of ISTA and ADMM reveals a fundamental conceptual difference between the algorithms in their treatment of the data-fidelity. While ISTA relies on the gradient $\nabla d$, ADMM relies on the proximal operator $\prox_{\gamma d}$. For a large class of linear and nonlinear inverse problems, the gradient of the data-fidelity is significantly easier to evaluate compared to its proximal operator. As an example, for least-squares we have
\begin{equation}
\nabla d(\xbm) = \Hbm^\Tsf(\Hbm\xbm- \ybm)
\end{equation}
and
\begin{subequations}
\label{Eq:ADMMDataUpdate}
\begin{align}
\prox_{\gamma d}(\xbm) &= \argmin_{\zbm \in \R^n} \left\{\frac{1}{2}\|\zbm-\xbm\|_2^2 + \frac{\gamma}{2}\|\Hbm\zbm-\ybm\|_2^2\right\} \\
&= [\Ibm + \gamma \Hbm^\Tsf\Hbm]^{-1}(\xbm+\gamma \Hbm^\Tsf\ybm).
\end{align}
\end{subequations}
The matrix inversion in~\eqref{Eq:ADMMDataUpdate} can make ADMM updates computationally expensive for problems where the measurement matrix is not easily invertible.

The theoretical analysis in this paper is closely related to the convergence results established for first-order methods by Nesterov~\cite{Nesterov2004} and Beck and Teboulle~\cite{Beck.Teboulle2009}. In particular, our work is related to inexact proximal-gradient optimization that was extensively investigated by several researchers~\cite{Zinkevich2003, Duchi.Singer2009, Schmidt.etal2011, Bertsekas2011, Devolder.etal2013, Kamilov.etal2014a, Ghadimi.Lan2016, Kamilov2017}. We extend this prior work beyond traditional optimization, where denoising operators do not necessarily correspond to proximal operators of a given objective. To achieve this, we adopt the monotone operator theory~\cite{Bauschke.Combettes2010, Ryu.Boyd2016}, which enables a unified analysis of PnP methods by expressing them as finding zeros of some operator.

\subsection{Using denoisers as priors}

\begin{figure*}
\begin{minipage}[t]{.5\textwidth}
\begin{algorithm}[H]
\caption{$\mathsf{PnP}$-$\mathsf{ISTA}$}\label{alg:pppista}
\begin{algorithmic}[1]
\State \textbf{input: } $\xbm^0 = \sbm^0 \in \R^n$, $\gamma > 0$, $\sigma > 0$, and $\{q_k\}_{k \in \N}$
\For{$k = 1, 2, \dots$}
\State $\zbm^k \leftarrow \sbm^{k-1}-\gamma \nabla d(\sbm^{k-1})$
\State $\xbm^k \leftarrow \denoise_\sigma(\zbm^k)$
\State $\sbm^k \leftarrow \xbm^k + ((q_{k-1}-1)/q_k)(\xbm^k-\xbm^{k-1}) $
\EndFor\label{euclidendwhile}
\end{algorithmic}
\end{algorithm}%
\end{minipage}
\hspace{0.25em}
\begin{minipage}[t]{.5\textwidth}
\begin{algorithm}[H]
\caption{$\mathsf{PnP}$-$\mathsf{ADMM}$}\label{alg:pppadmm}
\begin{algorithmic}[1]
\State \textbf{input: } $\xbm^0 \in \R^n$, $\sbm^0 = \zerobm$, $\gamma > 0$, and $\sigma > 0$
\For{$k = 1, 2, \dots$}
\State $\zbm^k \leftarrow \prox_{\gamma d}(\xbm^{k-1} - \sbm^{k-1})$
\State $\xbm^k \leftarrow \denoise_\sigma(\zbm^k + \sbm^{k-1})$
\State $\sbm^k \leftarrow \sbm^{k-1} + (\zbm^k - \xbm^k)$
\EndFor\label{euclidendwhile}
\end{algorithmic}
\end{algorithm}%
\end{minipage}
\end{figure*}

Both ISTA and ADMM have modular structures in the sense that  the prior on the image is only imposed via the proximal operator. Additionally, since the proximal operator is mathematically equivalent to regularized image denoising, the powerful idea of Venkatakrishnan \emph{et al.}~\cite{Venkatakrishnan.etal2013} was to consider replacing it with a more general denoising operator $\denoise_\sigma(\cdot)$ of controllable strength ${\sigma > 0}$. In order to be backward compatible with the traditional optimization formulation, this strength parameter is often scaled with the step-size as $\sigma = \sqrt{\gamma \lambda}$, for some parameter $\lambda > 0$.

The original formulation of PnP~\cite{Venkatakrishnan.etal2013} relies on ADMM. However, recent results have shown that it can be as effective when used with other proximal algorithms~\cite{Ono2017, Meinhardt.etal2017, Kamilov.etal2017} or with another class of algorithms known as approximate message passing (AMP)~\cite{Metzler.etal2016, Metzler.etal2016a, Fletcher.etal2018}. AMP-based algorithms have been shown to be effective for problems where $\Hbm$ is large and random~\cite{Donoho.etal2009, Bayati.Montanari2011}, but are also known to be unstable for general matrices $\Hbm$~\cite{Caltagirone.etal2014, Rangan.etal2014, Rangan.etal2017}. Therefore, in this paper, our focus will be exclusively on the variants of PnP based on ISTA and ADMM, summarized in Algorithm~\ref{alg:pppista} and~\ref{alg:pppadmm}, respectively.

Several recent publications have analyzed the theoretical convergence of PnP algorithms~\cite{Sreehari.etal2016, Chan.etal2016, Meinhardt.etal2017, Teodoro.etal2017}. Sreehari \emph{et al.}~\cite{Sreehari.etal2016} have established the convergence of PnP-ADMM to the global minimum of some implicitly defined objective function. Specifically, by building on the theoretical analysis by Moreau~\cite{Moreau1965}, they show that $\denoise_\sigma$ is a valid proximal operator of some implicit regularizer if it is nonexpansive and $\nabla \denoise_\sigma(\xbm)$ is a symmetric matrix for all ${\xbm \in \R^n}$. Chan \emph{et al.}~\cite{Chan.etal2016} have proved a fixed-point convergence of PnP-ADMM for bounded denoisers, which are defined as denoisers satisfying
\begin{equation}
\label{Eq:BoundedDenoiser}
\frac{1}{n}\|\denoise_\sigma(\xbm)-\xbm\|_2^2 \leq \sigma^2 \, c,
\end{equation}
for any $\xbm \in \R^n$, where $c > 0$ is a constant independent of $n$ and $\sigma$. Meinhardt \emph{et al.}~\cite{Meinhardt.etal2017} have shown that for continuous denoisers several PnP algorithms admit an equivalent fixed-point iteration. More recently, Teodoro \emph{et al.}~\cite{Teodoro.etal2017} considered a special class of denoisers based on Gaussian mixture models (GMMs) and showed that PnP-ADMM converges when the GMM denoiser is simplified to be a linear function of its input.

A different but related approach to denoiser-driven regularization was recently proposed by Romano \emph{et al.}~\cite{Romano.etal2017}. They proposed the regularization by denoising (RED) framework, where an explicit regularizer is constructed as
\begin{equation}
r(\xbm) = \frac{1}{2}\xbm^\Tsf(\xbm-\denoise_\sigma(\xbm)).
\end{equation}
Remarkably, they also showed that under some conditions, the gradient of the regularizer has a very simple expression. More recently, Reehorst and Schniter~\cite{Reehorst.Schniter2018} have provided additional insight into RED by establishing conditions for the existence of explicit regularizers based on denoising operators. The key difference between PnP and RED is that the former does not seek to define an explicit regularization functional, but relies on the fixed points of a given denoising operator for regularization. This generality of the PnP framework makes it widely applicable, but also substantially complicates its theoretical analysis.

Another recent related framework is the consensus equilibrium (CE) by Buzzard \emph{et al.}~\cite{Buzzard.etal2017}. Given multiple sources of information (defined via image denoisers or other similar mappings), CE proposes to fuse them by computing a specific equilibrium point. The CE framework extends the traditional consensus optimization~\cite{Boyd.etal2011} to operators that are not necessarily proximal operators and formulates a new variant of PnP that can handle multiple denoising functions. In this paper, we will restrict our attention to the traditional PnP formulation under ISTA-based optimization.


\section{Batch Algorithm}
\label{Sec:Convergence}

In this section, we present a detailed theoretical convergence analysis of batch PnP-ISTA. The results are based on the fixed point analysis of Algorithm~\ref{alg:pppista} and rely on basic convex and monotone analysis, summarized in Appendix~\ref{Sec:Prelims}.

The central building block of PnP-ISTA is the following denoiser-gradient operator
\begin{equation}
\label{Eq:ProxDenOp}
\Psf(\xbm) \defn \denoise_\sigma(\xbm - \gamma \nabla d(\xbm)),
\end{equation}
which first computes the gradient-step with respect to the function $d$  and then denoises the result with a given denoiser. Throughout this paper, we assume that the function $d$ is convex and has a Lipschitz continuous gradient with constant $L > 0$. We are interested in convergence of Algorithm~\ref{alg:pppista} to the set of fixed points of the operator $\Psf$
\begin{equation}
\fix(\Psf) \defn \{\xbm \in \R^n : \xbm = \Psf(\xbm)\}.
\end{equation}
Note that when $\denoise_\sigma$ is the proximal oprerator of some convex function, $\fix(\Psf)$ coincides with the set of solutions of~\eqref{Eq:RegularizedOptimization}.
\begin{proposition}
\label{Prop:Minimizer}
Let $\denoise_\sigma(\cdot) = \prox_{\gamma r}(\cdot)$ for $\gamma, \sigma > 0$. Then $\xbmast \in \fix(\Psf)$ if and only if it minimizes $f = d + r$.
\end{proposition}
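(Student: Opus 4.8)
The plan is to exploit the special structure that the assumption $\denoise_\sigma = \prox_{\gamma r}$ imposes on $\Psf$: the operator in~\eqref{Eq:ProxDenOp} collapses to the classical proximal-gradient map $\Psf(\xbm) = \prox_{\gamma r}(\xbm - \gamma \grad d(\xbm))$. I would then establish the equivalence by translating the fixed-point equation $\xbmast = \Psf(\xbmast)$ into a subdifferential inclusion and recognizing that inclusion as the first-order optimality condition for $f = d + r$. Every step of the argument is an equivalence, so both implications of the ``if and only if'' are obtained simultaneously.

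First I would invoke the subgradient characterization of the proximal operator. Since $r$ is convex, writing out the optimality condition of the strongly convex minimization defining $\prox_{\gamma r}$ in~\eqref{Eq:ProximalOperator} yields the standard fact that, for any $\zbm \in \R^n$, we have $\pbm = \prox_{\gamma r}(\zbm)$ if and only if $\zbm - \pbm \in \gamma\,\partial r(\pbm)$. I would take this from the convex/monotone-analysis preliminaries in Appendix~\ref{Sec:Prelims}. Applying it with $\pbm = \xbmast$ and $\zbm = \xbmast - \gamma \grad d(\xbmast)$, the fixed-point condition becomes $(\xbmast - \gamma\grad d(\xbmast)) - \xbmast \in \gamma\,\partial r(\xbmast)$.

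Dividing by $\gamma > 0$, this simplifies at once to $-\grad d(\xbmast) \in \partial r(\xbmast)$, or equivalently $0 \in \grad d(\xbmast) + \partial r(\xbmast)$. Because $d$ is convex and differentiable, the subdifferential sum rule gives $\partial f(\xbmast) = \grad d(\xbmast) + \partial r(\xbmast)$, so the inclusion reads $0 \in \partial f(\xbmast)$. By Fermat's rule for convex functions, this holds precisely when $\xbmast$ is a global minimizer of $f$, which is exactly the desired statement; since each link in the chain is an equivalence, the proof is complete in both directions.

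There is no genuinely hard analytic step here, as the content lies in chaining the equivalences correctly rather than in any estimate. The one place requiring care is justifying the two invoked facts under the standing hypotheses: I would ensure that $r$ is (at least) proper, closed, and convex so that $\prox_{\gamma r}$ is single-valued and the inclusion characterization is a true biconditional, and I would note that the differentiability of $d$ is exactly what lets the sum rule hold without any constraint-qualification caveat. Everything else reduces to elementary manipulation of the inclusion.
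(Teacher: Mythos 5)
Your proof is correct and follows essentially the same route as the paper's: both apply the subgradient characterization $\xbm = \prox_{\gamma r}(\zbm) \Leftrightarrow (\zbm-\xbm)/\gamma \in \partial r(\xbm)$ to the fixed-point equation and chain equivalences down to $\zerobm \in \nabla d(\xbmast) + \partial r(\xbmast)$. The only difference is that you explicitly invoke the subdifferential sum rule and Fermat's rule to close the argument, steps the paper leaves implicit in ``which establishes the desired result,'' so your write-up is, if anything, slightly more complete.
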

\begin{proof}
See Appendix~\ref{Sec:MinimizerOfPnP}.
\end{proof}

Our central goal, however, is to generalize $\denoise_\sigma$ beyond proximal operators. The key assumption that we adopt for our analysis is that the denoiser is averaged (see Appendix A).
\begin{definition}
\label{Eq:AveragedDefinition}
Consider an operator $\denoise_\sigma$ and a constant ${\theta \in (0, 1)}$. $\denoise_\sigma$ is $\theta$-averaged if and only if the operator $(1-1/\theta)\Isf + (1/\theta)\denoise_\sigma$, where $\Isf$ denotes the identity operator, is nonexpansive.
\end{definition}
\noindent
The class of averaged operators is a superset of proximal operators and a subset of nonexpansive operators. In fact, the proximal operator is an averaged operator with $\theta = 1/2$. Note that given any nonexpansive denoiser, it is always possible to make it averaged by defining a damped operator ${\Dsf \defn (1-\theta)\Isf + \theta \denoise_\sigma}$, with $\theta \in (0, 1)$, which has the same set of fixed points as $\denoise_\sigma$~\cite{Parikh.Boyd2014}.

\begin{assumption}
\label{As:Assumption1}
We analyze PnP-ISTA under the following assumptions:
\begin{enumerate}[label=(\alph*)]
\item The function $d$ is convex and differentiable with a Lipschitz continuous gradient of constant $L > 0$.
\item $\denoise_\sigma$ is $\theta$-averaged with $\theta \in (0, 1)$ for any $\sigma > 0$.
\item There exists $\xbmast \in \R^n$ such that $\xbmast \in \fix(\Psf)$.
\end{enumerate}
\end{assumption}
\noindent
We can then establish the following convergence result.
\begin{proposition}
\label{Prop:BatchPnP}
Run PnP-ISTA for $t \geq 1$ iterations under Assumption~\ref{As:Assumption1} with the step ${\gamma \in (0, 1/L]}$ and ${q_k = 1}$ for all ${k \in \{1,\dots, t\}}$. Then, for $\xbmast \in \fix(\Psf)$, we have that
\begin{equation*}
\frac{1}{t}\sum_{k = 1}^t \|\xbm^{k-1}-\Psf(\xbm^{k-1})\|_2^2 \leq \frac{2}{t}\left(\frac{1+\theta}{1-\theta}\right) \|\xbm^0-\xbmast\|_2^2.
\end{equation*}
\end{proposition}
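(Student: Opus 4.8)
The plan is to recognize the iteration as a plain averaged-operator fixed-point recursion and then run a Fej\'er-monotonicity argument that telescopes a per-step decrease. First I would simplify the updates of Algorithm~\ref{alg:pppista}: since $q_k=1$ makes the momentum weight $(q_{k-1}-1)/q_k$ vanish, we have $\sbm^{k-1}=\xbm^{k-1}$ for every $k$, so the two inner lines collapse to $\xbm^k=\denoise_\sigma(\xbm^{k-1}-\gamma\nabla d(\xbm^{k-1}))=\Psf(\xbm^{k-1})$. Hence the summand of interest is exactly the step length, $\|\xbm^{k-1}-\Psf(\xbm^{k-1})\|_2=\|\xbm^{k-1}-\xbm^k\|_2$, and it suffices to produce an inequality of the form $\|\xbm^k-\xbmast\|_2^2\le\|\xbm^{k-1}-\xbmast\|_2^2-c\,\|\xbm^{k-1}-\xbm^k\|_2^2$ and sum it over $k$.

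Next I would quantify the two operators composing $\Psf=\denoise_\sigma\circ G$ with $G\defn\Isf-\gamma\nabla d$. From Assumption~\ref{As:Assumption1}(a) and the Baillon--Haddad theorem, $\nabla d$ is $(1/L)$-cocoercive, so for $\gamma\in(0,1/L]$ the gradient step $G$ is firmly nonexpansive (indeed $\tfrac{\gamma L}{2}$-averaged), which yields
\[
\|G(\xbm^{k-1})-G(\xbmast)\|_2^2+\gamma^2\|\nabla d(\xbm^{k-1})-\nabla d(\xbmast)\|_2^2\le\|\xbm^{k-1}-\xbmast\|_2^2 .
\]
From Assumption~\ref{As:Assumption1}(b) and Definition~\ref{Eq:AveragedDefinition}, the $\theta$-averagedness of $\denoise_\sigma$ supplies, for all $\bm{u},\bm{v}$,
\[
\|\denoise_\sigma(\bm{u})-\denoise_\sigma(\bm{v})\|_2^2\le\|\bm{u}-\bm{v}\|_2^2-\frac{1-\theta}{\theta}\big\|(\Isf-\denoise_\sigma)(\bm{u})-(\Isf-\denoise_\sigma)(\bm{v})\big\|_2^2 .
\]

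Then I would assemble the recursion. Applying the averaged inequality at $\bm{u}=G(\xbm^{k-1})$, $\bm{v}=G(\xbmast)$ --- where Assumption~\ref{As:Assumption1}(c) gives $\denoise_\sigma(\bm{u})=\xbm^k$ and $\denoise_\sigma(\bm{v})=\xbmast$ --- and then replacing $\|G(\xbm^{k-1})-G(\xbmast)\|_2^2$ by the firm-nonexpansiveness bound, I obtain
\[
\|\xbm^k-\xbmast\|_2^2\le\|\xbm^{k-1}-\xbmast\|_2^2-\gamma^2\|\nabla d(\xbm^{k-1})-\nabla d(\xbmast)\|_2^2-\frac{1-\theta}{\theta}\|\rbm^k\|_2^2,
\]
with $\rbm^k\defn(\Isf-\denoise_\sigma)(G(\xbm^{k-1}))-(\Isf-\denoise_\sigma)(G(\xbmast))$. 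The crucial identity is that the full step splits into exactly these two penalized pieces, $\xbm^{k-1}-\xbm^k=\gamma\big(\nabla d(\xbm^{k-1})-\nabla d(\xbmast)\big)+\rbm^k$, so the weighted inequality $\lambda\|\bm{a}\|_2^2+\mu\|\bm{b}\|_2^2\ge\frac{\lambda\mu}{\lambda+\mu}\|\bm{a}+\bm{b}\|_2^2$ (applied with $\bm{a}=\gamma(\nabla d(\xbm^{k-1})-\nabla d(\xbmast))$, $\bm{b}=\rbm^k$) converts the two negative terms into a single multiple of $\|\xbm^{k-1}-\xbm^k\|_2^2$. Telescoping over $k=1,\dots,t$ and discarding the nonnegative tail $\|\xbm^t-\xbmast\|_2^2$ then gives the averaged-residual bound after dividing by $t$.

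The main obstacle is precisely this conversion. The per-step decrease is governed by two distinct residuals --- the gradient-step residual $\gamma(\nabla d(\xbm^{k-1})-\nabla d(\xbmast))$ and the denoiser residual $\rbm^k$ --- whereas the statement is phrased through the single operator residual $\|\xbm^{k-1}-\Psf(\xbm^{k-1})\|_2$. Reconciling the two requires a Young/triangle split, and the resulting slack is what fixes the explicit constant: a sharp weighted inequality already yields a factor of order $1/(1-\theta)$, while a cruder split $\|\bm{a}+\bm{b}\|_2^2\le2\|\bm{a}\|_2^2+2\|\bm{b}\|_2^2$ followed by bounding $\sum_k\|\bm{a}\|_2^2$ separately (via firm nonexpansiveness of $G$ chained through the nonexpansiveness of $\denoise_\sigma$, which dominates each $\|G(\xbm^{k-1})-G(\xbmast)\|_2^2$ by $\|\xbm^k-\xbmast\|_2^2$ so the sum collapses) and $\sum_k\|\rbm^k\|_2^2$ (via the averaged inequality) produces a bound of the same form, the stated $2(1+\theta)/(1-\theta)$ corresponding to one such non-optimized bookkeeping. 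A secondary subtlety is that the telescoping only closes because $\denoise_\sigma$ is nonexpansive, so that consecutive bounds chain into one collapsing sum rather than leaving dangling $\|G(\xbm^{k-1})-G(\xbmast)\|_2$ terms.
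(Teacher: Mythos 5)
Your proof is correct, and it takes a genuinely different route from the paper's. The paper treats $\Psf = \Dsf_\sigma \circ \Gsf_\gamma$ as a single object: it invokes the composition-of-averaged-operators lemma (Proposition~\ref{Thm:Composition}) to conclude that $\Psf$ is $\alpha$-averaged with $\alpha = (\theta + \tfrac{\gamma L}{2} - \theta\gamma L)/(1-\tfrac{\theta\gamma L}{2})$, applies Proposition~\ref{Prop:AveragedOp}(c) to $\Psf$ once per iteration, telescopes, and finally coarsens $\alpha/(1-\alpha) \leq 2(1+\theta)/(1-\theta)$ for $\gamma \in (0,1/L]$. You instead inline the quantitative content of that composition lemma: you apply the averaged inequality to each factor separately (firm nonexpansiveness of $\Gsf_\gamma$ via Baillon--Haddad, which indeed holds with the full coefficient $\gamma^2$ on the gradient residual exactly when $2\gamma/L - \gamma^2 \geq \gamma^2$, i.e.\ precisely on $\gamma \in (0,1/L]$, so the hypothesis is used sharply), and you recombine the two residual penalties through the exact identity $\xbm^{k-1}-\xbm^k = \gamma\left(\nabla d(\xbm^{k-1})-\nabla d(\xbmast)\right) + \rbm^k$ (which checks out, since $\rbm^k = \Gsf_\gamma(\xbm^{k-1}) - \xbm^k - \Gsf_\gamma(\xbmast) + \xbmast$) together with the weighted Young inequality. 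With weights $\lambda = 1$ and $\mu = (1-\theta)/\theta$, this gives the per-step decrease $(1-\theta)\|\xbm^{k-1}-\Psf(\xbm^{k-1})\|_2^2$ and hence the constant $1/(1-\theta)$ after telescoping, strictly sharper than the stated $2(1+\theta)/(1-\theta)$, so the proposition follows a fortiori; your crude-split fallback also works, landing at $2/(1-\theta)$, since $\sum_k \gamma^2\|\nabla d(\xbm^{k-1})-\nabla d(\xbmast)\|_2^2$ telescopes exactly as you indicate once the denoiser's nonexpansiveness is used to dominate $\|\xbm^k-\xbmast\|_2^2$ by $\|\Gsf_\gamma(\xbm^{k-1})-\Gsf_\gamma(\xbmast)\|_2^2$. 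What each approach buys: the paper's argument is shorter and, before the final coarsening, retains the $\gamma L$-dependence of the constant (the decrease factor $\tfrac{1-\alpha}{\alpha}$ improves as $\gamma$ shrinks, equaling your $1-\theta$ exactly at $\gamma L = 1$), delegating the hard step to a cited result; yours is self-contained, exposes precisely where the slack in the paper's constant $2(1+\theta)/(1-\theta)$ arises, and makes transparent that only firm nonexpansiveness of the gradient step and $\theta$-averagedness of the denoiser are needed.
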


\begin{proof}
See Appendix~\ref{Sec:BatchPnPConvergence}.
\end{proof}

\noindent
The direct consequence of Proposition~\ref{Prop:BatchPnP} is that
\begin{equation}
\min_{k \in \{1,\dots, t\}}\left\{\|\xbm^{k-1}-\Psf(\xbm^{k-1})\|_2^2\right\} = O(1/t),
\end{equation}
that is under Assumption~\ref{As:Assumption1}, the iterates of PnP-ISTA can get arbitrarily close to the set of fixed points $\fix(\Psf)$ with rate $O(1/t)$. Note that the result is expressed in terms of the distance to $\xbm = \Psf(\xbm)$ as PnP-ISTA is not necessarily minimizing an objective function.

Recently, Meinhardt \emph{et al.}~\cite{Meinhardt.etal2017} have showed that for continuous denoisers, the fixed-points of several PnP algorithms coincide. The following proposition is a minor variation of their result tailored for PnP-ADMM.
\begin{proposition}
\label{Prop:SameFix}
Under Assumption~\ref{As:Assumption1}, the set of fixed-points of PnP-ADMM coincides with $\fix(\Psf)$.
\end{proposition}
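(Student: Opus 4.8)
The plan is to characterize a fixed point of the PnP-ADMM iteration as a triple $(\xbmast, \zbmast, \sbmast)$ left invariant by the three updates in Algorithm~\ref{alg:pppadmm}, and then to algebraically reduce these three conditions to the single equation $\xbmast = \Psf(\xbmast)$. First I would write out the fixed-point system $\zbmast = \prox_{\gamma d}(\xbmast - \sbmast)$, $\xbmast = \denoise_\sigma(\zbmast + \sbmast)$, and $\sbmast = \sbmast + (\zbmast - \xbmast)$. The last equation immediately forces $\zbmast = \xbmast$, so the auxiliary variable $\zbmast$ can be eliminated at once.

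The key tool is the resolvent characterization of the proximal operator. Since $d$ is convex and differentiable by Assumption~\ref{As:Assumption1}(a), $\prox_{\gamma d} = (\Isf + \gamma \nabla d)^{-1}$; equivalently, $\zbm = \prox_{\gamma d}(\bm{u})$ holds if and only if $\bm{u} - \zbm = \gamma \nabla d(\zbm)$. Applying this with $\zbm = \zbmast = \xbmast$ and $\bm{u} = \xbmast - \sbmast$ to the first fixed-point equation yields $(\xbmast - \sbmast) - \xbmast = \gamma \nabla d(\xbmast)$, i.e.\ $\sbmast = -\gamma \nabla d(\xbmast)$. Substituting this together with $\zbmast = \xbmast$ into the second fixed-point equation gives $\xbmast = \denoise_\sigma(\xbmast - \gamma \nabla d(\xbmast)) = \Psf(\xbmast)$, so the $\xbm$-component of any PnP-ADMM fixed point lies in $\fix(\Psf)$.

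For the converse, given any $\xbmast \in \fix(\Psf)$ I would construct the candidate triple $\zbmast \defn \xbmast$ and $\sbmast \defn -\gamma \nabla d(\xbmast)$ and verify the three fixed-point equations directly. The $\sbm$-update holds since $\zbmast = \xbmast$; the first equation holds because $(\xbmast - \sbmast) - \xbmast = \gamma \nabla d(\xbmast)$ is exactly the resolvent optimality condition for $\xbmast = \prox_{\gamma d}(\xbmast - \sbmast)$; and the second equation is precisely $\xbmast = \denoise_\sigma(\xbmast - \gamma \nabla d(\xbmast)) = \Psf(\xbmast)$, which holds by assumption. Hence the two fixed-point sets coincide.

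I expect the main obstacle to be bookkeeping rather than analysis: the only genuine content is the resolvent identity $\prox_{\gamma d} = (\Isf + \gamma \nabla d)^{-1}$, which relies on the convexity and differentiability of $d$ to guarantee that $\prox_{\gamma d}$ is single-valued and characterized by the stationarity condition $\bm{u} - \zbm = \gamma \nabla d(\zbm)$. One point to state carefully is the interpretation of ``the set of fixed-points of PnP-ADMM'': because ADMM carries the auxiliary variables $\zbm$ and $\sbm$, I would make explicit that the claim concerns the image-reconstruction component $\xbm$, and that at any fixed point the auxiliary variables are uniquely pinned down by $\zbmast = \xbmast$ and $\sbmast = -\gamma \nabla d(\xbmast)$.
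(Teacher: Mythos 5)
Your proposal is correct and takes essentially the same route as the paper's proof: write down the three-equation fixed-point system of Algorithm~\ref{alg:pppadmm}, use the $\sbm$-update to force $\zbmast = \xbmast$, and apply the proximal characterization $\bm{u} - \zbm = \gamma\nabla d(\zbm)$ (valid since $d$ is convex and differentiable) to obtain $\sbmast = -\gamma\nabla d(\xbmast)$ and hence $\xbmast = \denoise_\sigma(\xbmast - \gamma\nabla d(\xbmast)) = \Psf(\xbmast)$. The only difference is that you verify the converse inclusion explicitly by constructing the triple $(\xbmast, \xbmast, -\gamma\nabla d(\xbmast))$ for a given $\xbmast \in \fix(\Psf)$, whereas the paper leaves this direction implicit because each step of its derivation is an equivalence---a small but welcome addition in precision, not a different argument.
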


\begin{proof}
See Appendix~\ref{Sec:SameFixedPointProof}.
\end{proof}
\noindent
In the context of the work by Sreehari \emph{et al.}~\cite{Sreehari.etal2016}, the propositions above indicate that the symmetric gradient assumption is not necessary for the convergence of PnP-ISTA. Moreover, both PnP-ISTA and PnP-ADMM are equivalent in the sense that they have the same set of solutions specified by $\fix(\Psf)$.

The bounded denoiser assumption~\eqref{Eq:BoundedDenoiser} is a more relaxed assumption on the denoising operator and was used to analyze PnP-ADMM. However, we argue that it is not sufficient to guarantee the convergence of PnP-ISTA. The following proposition builds on a specific counter example.
\begin{proposition}
\label{Prop:Divergence}
There exists a function $d$ that is convex and has a Lipschitz continuous gradient of constant $L$, and a denoiser $\denoise_\sigma$ that satisfies~\eqref{Eq:BoundedDenoiser}, such that PnP-ISTA with the step $\gamma \in (0, 1/L)$, $q_k = 1$ for all $k \in \N$, and ${\sigma > \gamma/\sqrt{c}}$ diverges.
\end{proposition}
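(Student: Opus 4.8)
The plan is to refute the sufficiency of the bounded-denoiser condition by constructing an explicit one-dimensional counterexample ($n=1$) in which PnP-ISTA settles into a nontrivial two-cycle and hence never converges to a point of $\fix(\Psf)$. First I would take the simplest convex quadratic with the prescribed curvature, $d(x) = (L/2)x^2$, so that $\nabla d(x) = Lx$ has Lipschitz constant exactly $L$, $d$ is convex, and the gradient step is the linear contraction $x \mapsto x - \gamma\nabla d(x) = a\,x$ with $a \defn 1-\gamma L \in (0,1)$ for every $\gamma \in (0, 1/L)$. Using a genuine contraction is what makes the example striking: the data step alone drives every orbit to the unique minimizer $0$, so any failure to converge must be blamed entirely on the denoiser.

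Next I would design the denoiser to reflect the contracted iterate across the origin. Concretely, set $\denoise_\sigma(w) \defn w - \kappa\,\sgn(w)$ with the convention $\sgn(0)=0$, where the kick size is any $\kappa \in (0, \sigma\sqrt{c}\,]$, say $\kappa \defn \sigma\sqrt{c}$. This is a bona fide bounded denoiser: for every $w\in\R$ one has $|\denoise_\sigma(w)-w| = \kappa\,|\sgn(w)| \le \kappa \le \sigma\sqrt{c}$, so $\tfrac1n\|\denoise_\sigma(w)-w\|_2^2 \le \sigma^2 c$ holds globally with a constant $c$ independent of $\sigma$ and $n$. Observe that the hypothesis $\sigma > \gamma/\sqrt{c}$ gives $\kappa > \gamma$, i.e.\ the displacement budget dominates the step-size, which is exactly the regime in which the denoiser can overpower the contraction.

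Then I would verify that the composite operator $\Psf(x) = \denoise_\sigma(x - \gamma\nabla d(x)) = \denoise_\sigma(a\,x)$ admits a true two-cycle. Choosing $b \defn \kappa/(1+a) > 0$, a direct evaluation gives $\Psf(b) = ab - \kappa = ab - (1+a)b = -b$ and, symmetrically, $\Psf(-b) = -ab + \kappa = b$. Since $q_k = 1$ forces $\sbm^k = \xbm^k$, Algorithm~\ref{alg:pppista} reduces to $\xbm^k = \Psf(\xbm^{k-1})$; hence, starting from $\xbm^0 = b$, the iterates alternate $b, -b, b, -b, \dots$ indefinitely. I would also record that $0$ is the only element of $\fix(\Psf)$: for $x>0$ the fixed-point equation $x = ax - \kappa$ forces $x<0$, and symmetrically for $x<0$, so the orbit never approaches $\fix(\Psf)$.

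Finally, to conclude divergence I would note that the fixed-point residual is bounded away from zero along the entire trajectory, $\|\xbm^{k-1}-\Psf(\xbm^{k-1})\|_2 = |b-(-b)| = 2b > 0$ for all $k$, so neither $\{\xbm^k\}$ nor the residual converges; this contradicts the $O(1/t)$ residual decay of Proposition~\ref{Prop:BatchPnP}, showing the bounded-denoiser property is insufficient for PnP-ISTA. The two points I expect to require the most care are (i) ensuring the bounded-denoiser inequality holds at \emph{every} $w\in\R$, not merely on the orbit, while simultaneously forcing the exact sign flip that produces the cycle, and (ii) justifying that a persistent oscillation is the right notion of ``divergence'' here: because for convex $d$ with $\gamma\in(0,1/L)$ the gradient step is nonexpansive and the denoiser displacement is uniformly bounded, every orbit stays bounded, so genuine blow-up is impossible and non-convergence can only manifest as an undamped oscillation, which is precisely what the reflection denoiser produces.
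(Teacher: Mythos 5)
Your construction is correct and takes a genuinely different route from the paper's. The paper keeps the data term weak in the tails --- the Huber function, whose gradient step for $|x|>1$ is a mere translation by $\gamma$ --- and pairs it with an \emph{outward} kick $\Dsf_\sigma(z) = z + \sigma\sqrt{c}\,\sgn(z)$, so every full iteration gains at least $\sigma\sqrt{c}-\gamma>0$ in magnitude and the iterates satisfy $|z^t| \geq |z^0| + t(\sigma\sqrt{c}-\gamma) \to \infty$ from \emph{every} initialization; the hypothesis $\sigma > \gamma/\sqrt{c}$ is exactly what makes this gain positive. You instead take a strongly convex quadratic, whose gradient step is a strict contraction with factor $a = 1-\gamma L$, and an \emph{inward} kick, producing the exact two-cycle $\{b,-b\}$ with $b = \sigma\sqrt{c}/(2-\gamma L)$. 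Your computations check out: $\Psf(b)=-b$, $\Psf(-b)=b$, the bound \eqref{Eq:BoundedDenoiser} holds at every point (with equality off the origin), and the residual $\|\xbm^{k-1}-\Psf(\xbm^{k-1})\|_2 = 2b$ never decays, so if ``diverges'' is read as ``fails to converge,'' your example proves the proposition and does so with a nonempty fixed-point set ($\fix(\Psf)=\{0\}$), whereas in the paper's example $\fix(\Psf)$ is empty. One structural difference worth noting: your two-cycle exists for \emph{every} $\sigma>0$, so the threshold $\sigma>\gamma/\sqrt{c}$ does no work in your argument (harmless for an existence claim, but it means your example does not exhibit the phase transition that the paper's threshold encodes).

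Two caveats, one of which is a genuine error. First, your conclusion is weaker than the paper's: its iterates are unbounded, while yours oscillate in a bounded set; a reader who takes ``diverges'' to mean $|x^k|\to\infty$ (the paper's own reading, judging from its proof) would find your example establishes only non-convergence. Second, your closing justification --- that ``for convex $d$ with $\gamma\in(0,1/L)$ the gradient step is nonexpansive and the denoiser displacement is uniformly bounded, every orbit stays bounded, so genuine blow-up is impossible'' --- is false, and the paper's own construction refutes it: the Huber gradient step is nonexpansive, the denoiser displacement is exactly $\sigma\sqrt{c}$, yet the orbit grows linearly in $t$. Nonexpansiveness controls the distance between \emph{two} orbits, not the size of a single orbit; without a fixed point of the composite map (and the composite here is not nonexpansive, since the denoiser is not), nothing confines the iterates. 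Boundedness in your example comes from the \emph{strong} convexity of your quadratic, i.e., from $|x^+| \leq a|x| + \sigma\sqrt{c}$ with $a<1$, not from convexity alone. This claim is not load-bearing for your counterexample, but as written it would be a false lemma in the write-up and should be removed or restricted to your specific $d$.
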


\begin{proof}
See Appendix~\ref{Sec:Divergence}.
\end{proof}


Definition~\ref{Eq:AveragedDefinition} makes verifying that a denoiser is averaged equivalent to verifying nonexpansiveness of some operator. As was argued in several recent publications~\cite{Sreehari.etal2016, Chan.etal2016, Teodoro.etal2017} the task is more difficult for some denoisers than it is for others and there exist denoisers for which this condition does not hold. However, all recently designed denoisers for PnP from~\cite{Sreehari.etal2016, Teodoro.etal2017} satisfy our assumptions. As an example, the modified nonlocal means (NLM) filter from~\cite{Sreehari.etal2016} is by definition an averaged operator.


\section{Online Algorithm}
\label{Sec:Proposed}

We now introduce our second key contribution: the new online variant of PnP-ISTA called $\proposed$. We additionally prove its convergence for averaged denoisers.

\begin{figure}
\begin{algorithm}[H]
\caption{$\mathsf{PnP}$-$\mathsf{SGD}$}\label{alg:stochPnP}
\begin{algorithmic}[1]
\State \textbf{input: } $\xbm^0 = \sbm^0 \in \R^n$, $\gamma > 0$, $\sigma > 0$, $\{q_k\}$, and $B \geq 1$
\For{$k = 1, 2, \dots$}
\State $\nablahat d(\sbm^{k-1}) \leftarrow \mathsf{minibatchGradient}(\sbm^{k-1}, B)$
\State $\zbm^k \leftarrow \sbm^{k-1}-\gamma \nablahat d(\sbm^{k-1})$
\State $\xbm^k \leftarrow \denoise_\sigma(\zbm^k)$
\State $\sbm^k \leftarrow \xbm^k + ((q_{k-1}-1)/q_k)(\xbm^k-\xbm^{k-1}) $
\EndFor\label{euclidendwhile}
\end{algorithmic}
\end{algorithm}
\end{figure}

In many imaging applications, the data-fidelity term $d$ consists of a large number of component functions
\begin{equation}
\label{Eq:ComponentData}
d(\xbm) = \E[d_i(\xbm)] = \frac{1}{I}\sum_{i = 1}^I d_i(\xbm),
\end{equation}
where each $d_i$ typically depends only on the subset of the measurements $\ybm$. Note that in the notation~\eqref{Eq:ComponentData}, the expectation is taken over a uniformly distributed random variable ${i \in \{1,\dots, I\}}$. The computation of the gradient of $d$,
\begin{equation}
\nabla d(\xbm) = \E[\nabla d_i(\xbm)] = \frac{1}{I} \sum_{i = 1}^I \nabla d_i(\xbm),
\end{equation}
scales with the total number of components $I$, which means that when the latter is large, the classical batch PnP algorithms may become impractical in terms of speed or memory requirements. The central idea of $\proposed$, summarized in Algorithm~\ref{alg:stochPnP}, is to approximate the gradient at every iteration with an average of $B \ll I$ component gradients
\begin{equation}
\label{Eq:StochGrad}
\nablahat d(\xbm) = \frac{1}{B}\sum_{b = 1}^B \nabla d_{i_b}(\xbm),
\end{equation}
where $i_1, \dots, i_B$ are independent random variables that are distributed uniformly over $\{1, \dots, I\}$. The minibatch size parameter $B \geq 1$ controls the number of gradient components used at every iteration.
\begin{assumption}
\label{As:Assumption2}
We analyze $\proposed$ under the following assumptions:
\begin{enumerate}[label=(\alph*)]
\item The functions $d_i$ are all convex and differentiable with the same Lipschitz constant $L > 0$.
\item $\denoise_\sigma$ is $\theta$-averaged with $\theta \in (0, 1)$ for any $\sigma > 0$.
\item There exists $\xbmast \in \R^n$ such that $\xbmast \in \fix(\Psf)$.
\item At every iteration, the gradient estimate is unbiased and has a bounded variance:
$$\E[\nablahat d(\xbm)] = \nabla d(\xbm) \quad\text{and}\quad \E[\|\nabla d(\xbm)-\nablahat d(\xbm)\|_2^2] \leq \frac{\nu^2}{B},$$
for some constant $\nu > 0$.
\end{enumerate}
\end{assumption}

\noindent
Note that Assumption~\ref{As:Assumption2}(a) implies that the complete data-fidelity term $d$ is also convex and has a Lipschitz continuous gradient of constant $L$. The key difference between Assumption~\ref{As:Assumption1} and Assumption~\ref{As:Assumption2} is the last condition. The fact that the minibatch gradient is unbiased is the direct consequence of~\eqref{Eq:StochGrad}. The bounded variance assumption is a standard assumption used in the analysis of online and stochastic algorithms~\cite{Ghadimi.Lan2016, Bernstein.etal2018, Kamilov2018}.

\begin{proposition}
\label{Prop:StochPnP}
Run $\proposed$ for $t \geq 1$ iterations under Assumption~\ref{As:Assumption2} with the step ${\gamma \in (0, 1/L]}$ and ${q_k = 1}$ for all ${k \in \{1,\dots, t\}}$. Then, for $\xbmast \in \fix(\Psf)$, we have that
\begin{align*}
\E&\left[\frac{1}{t}\sum_{k = 1}^t \|\xbm^{k-1}-\Psf(\xbm^{k-1})\|_2^2\right] \\
&\leq 2\left(\frac{1+\theta}{1-\theta}\right) \left[\frac{\gamma^2\nu^2}{B} + \frac{2\gamma \nu}{\sqrt{B}}\|\xbm^0-\xbmast\|_2 + \frac{\|\xbm^0-\xbmast\|_2^2}{t}\right],
\end{align*}
where $\Psf(\cdot)$ is given by~\eqref{Eq:ProxDenOp}.
\end{proposition}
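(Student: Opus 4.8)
The plan is to recast Algorithm~\ref{alg:stochPnP} as a \emph{perturbed} fixed-point iteration of the batch operator $\Psf$ and then reuse the contraction behind Proposition~\ref{Prop:BatchPnP}, paying for the randomness of the gradient. First I would note that with $q_k = 1$ the extrapolation step collapses to $\sbm^k = \xbm^k$, so the iteration is simply $\xbm^k = \denoise_\sigma(\xbm^{k-1} - \gamma\nablahat d(\xbm^{k-1}))$. Writing $\ebm^k \defn \gamma(\grad d(\xbm^{k-1}) - \nablahat d(\xbm^{k-1}))$ for the scaled gradient error, Assumption~\ref{As:Assumption2}(d) gives $\E[\ebm^k \mid \mathcal{F}_{k-1}] = \zerobm$ and $\E[\|\ebm^k\|_2^2 \mid \mathcal{F}_{k-1}] \le \gamma^2\nu^2/B$, where $\mathcal{F}_{k-1}$ is the history through iteration $k-1$. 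Since $\denoise_\sigma$ is $\theta$-averaged (hence nonexpansive), the stochastic iterate differs from the batch update only through this error: $\|\xbm^k - \Psf(\xbm^{k-1})\|_2 = \|\denoise_\sigma(\xbm^{k-1}-\gamma\nablahat d(\xbm^{k-1})) - \denoise_\sigma(\xbm^{k-1}-\gamma\grad d(\xbm^{k-1}))\|_2 \le \|\ebm^k\|_2$.

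Next I would expand $\|\xbm^k - \xbmast\|_2^2$ around the deterministic point $\Psf(\xbm^{k-1})$:
\[
\|\xbm^k - \xbmast\|_2^2 = \|\Psf(\xbm^{k-1}) - \xbmast\|_2^2 + 2\langle \Psf(\xbm^{k-1}) - \xbmast,\, \xbm^k - \Psf(\xbm^{k-1})\rangle + \|\xbm^k - \Psf(\xbm^{k-1})\|_2^2.
\]
For the first term I would invoke the averaged-operator contraction established in the proof of Proposition~\ref{Prop:BatchPnP}, namely $\|\Psf(\xbm^{k-1}) - \xbmast\|_2^2 \le \|\xbm^{k-1}-\xbmast\|_2^2 - c\,\|\xbm^{k-1}-\Psf(\xbm^{k-1})\|_2^2$ with $c$ of order $(1-\theta)/(1+\theta)$, using $\xbmast\in\fix(\Psf)$ and that $\Isf - \gamma\grad d$ is firmly nonexpansive for $\gamma \in (0,1/L]$. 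The last term is bounded by $\|\ebm^k\|_2^2$, and the cross term by Cauchy--Schwarz together with the nonexpansiveness of $\Psf$, $\|\Psf(\xbm^{k-1})-\xbmast\|_2 \le \|\xbm^{k-1}-\xbmast\|_2$, giving $2\|\xbm^{k-1}-\xbmast\|_2\,\|\ebm^k\|_2$.

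The hard part is this cross term. Because $\denoise_\sigma$ is nonlinear, $\E[\xbm^k \mid \mathcal{F}_{k-1}] \ne \Psf(\xbm^{k-1})$ even though the gradient estimate is unbiased, so---unlike in smooth SGD---the inner product does \emph{not} vanish in conditional expectation and cannot be cancelled using Assumption~\ref{As:Assumption2}(d). I would therefore bound it by its magnitude and take expectations, applying Jensen's inequality $\E[\|\ebm^k\|_2\mid\mathcal{F}_{k-1}] \le (\E[\|\ebm^k\|_2^2\mid\mathcal{F}_{k-1}])^{1/2} \le \gamma\nu/\sqrt{B}$; this is precisely where the $O(1/\sqrt{B})$ dependence (rather than $O(1/B)$) enters. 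Combining the three pieces yields the one-step inequality
\[
\E[\|\xbm^k - \xbmast\|_2^2 \mid \mathcal{F}_{k-1}] \le \|\xbm^{k-1}-\xbmast\|_2^2 - c\,\|\xbm^{k-1}-\Psf(\xbm^{k-1})\|_2^2 + \frac{2\gamma\nu}{\sqrt{B}}\|\xbm^{k-1}-\xbmast\|_2 + \frac{\gamma^2\nu^2}{B}.
\]

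Finally I would rearrange, sum over $k = 1,\dots,t$, take total expectation, and telescope the $\|\xbm^{k-1}-\xbmast\|_2^2$ terms, obtaining $c\sum_k \E\|\xbm^{k-1}-\Psf(\xbm^{k-1})\|_2^2 \le \|\xbm^0-\xbmast\|_2^2 + (2\gamma\nu/\sqrt B)\sum_k \E\|\xbm^{k-1}-\xbmast\|_2 + t\gamma^2\nu^2/B$. The remaining technical step is the surviving sum $\sum_k\E\|\xbm^{k-1}-\xbmast\|_2$, which I would control with a quasi-Fej\'er argument: discarding the negative residual term makes the one-step bound a perfect square, $\E[\|\xbm^k-\xbmast\|_2^2\mid\mathcal{F}_{k-1}] \le (\|\xbm^{k-1}-\xbmast\|_2 + \gamma\nu/\sqrt B)^2$, so that $\E\|\xbm^{k-1}-\xbmast\|_2$ stays controlled by $\|\xbm^0-\xbmast\|_2$. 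Substituting this bound, dividing by $t$, and folding the constants into the prefactor $2(1+\theta)/(1-\theta)$ then produces the three advertised terms $\gamma^2\nu^2/B$, $(2\gamma\nu/\sqrt B)\|\xbm^0-\xbmast\|_2$, and $\|\xbm^0-\xbmast\|_2^2/t$.
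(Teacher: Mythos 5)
Your skeleton matches the paper's proof almost step for step: the same decomposition of $\|\xbm^k-\xbmast\|_2^2$ around the batch point $\Psf(\xbm^{k-1})$, the same use of Proposition~\ref{Prop:AveragedOp}(c) to contract the first term, Cauchy--Schwarz on the cross term, the nonexpansiveness of $\denoise_\sigma$ to bound the perturbation by the scaled gradient error, and Jensen's inequality to convert the variance bound into the $\gamma\nu/\sqrt{B}$ factor. You also correctly identify the crux: because the denoiser is nonlinear, the cross term does not vanish in conditional expectation and must be paid for in magnitude. Where you and the paper part ways is the last step --- how to control the surviving factor $\|\xbm^{k-1}-\xbmast\|_2$ --- and there your argument has a genuine gap.

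Your quasi-Fej\'er step does not deliver what you claim. The perfect-square recursion $\E[\|\xbm^k-\xbmast\|_2^2\mid\mathcal{F}_{k-1}] \le \bigl(\|\xbm^{k-1}-\xbmast\|_2+\gamma\nu/\sqrt{B}\bigr)^2$ unrolls, via $\E[(X+c)^2]\le\bigl(\sqrt{\E[X^2]}+c\bigr)^2$, only to
\begin{equation*}
\sqrt{\E\left[\|\xbm^{k}-\xbmast\|_2^2\right]} \;\le\; \|\xbm^0-\xbmast\|_2 + \frac{k\,\gamma\nu}{\sqrt{B}},
\end{equation*}
i.e., the expected distance is permitted to \emph{drift} by $\gamma\nu/\sqrt{B}$ per iteration rather than ``stay controlled by $\|\xbm^0-\xbmast\|_2$.'' Feeding this into your summed inequality, the cross term contributes $\frac{2\gamma\nu}{\sqrt{B}}\|\xbm^0-\xbmast\|_2 + \frac{(t-1)\gamma^2\nu^2}{B}$ to the per-iteration average --- an extra term a factor of $t$ larger than the $\gamma^2\nu^2/B$ appearing in the proposition, so your route proves only a strictly weaker bound (and one that would ruin the rates in Corollary~\ref{Cor:StochPnP}(a)--(b)). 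The paper closes this step differently: at exactly this point it invokes~\eqref{Eq:NonexpansiveFromInitial}, asserting $\|\Psf(\xbm^{k-1})-\Psf(\xbmast)\|_2 \le \|\xbm^{k-1}-\xbmast\|_2 \le \|\xbm^0-\xbmast\|_2$ by nonexpansiveness of $\Psf$, which bounds the cross term by $\frac{2\gamma\nu}{\sqrt{B}}\|\xbm^0-\xbmast\|_2$ uniformly in $k$ before summing, and then concludes as you do via~\eqref{Eq:ConstantBound}. Your instinct that this inequality needs care was not misplaced --- the iterates are generated by $\Psfhat$, not $\Psf$, so the second inequality in that chain does not follow from nonexpansiveness of $\Psf$ alone, which is presumably what drove you to the quasi-Fej\'er substitute --- but the substitute is quantitatively too weak to recover the stated result, so as written your proof does not establish Proposition~\ref{Prop:StochPnP}.
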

\begin{proof}
See Appendix~\ref{Sec:StochPnPConvergence}.
\end{proof}

\noindent
This result shows that the convergence in expectation of $\proposed$ to an element of $\fix(\Psf)$ is proportional to the step-size $\gamma$ and inversely proportional to the mini-batch size $B$. By controlling these two parameters, we can obtain the following convergence rates.

\begin{corollary}
\label{Cor:StochPnP}
Consider Proposition~\ref{Prop:StochPnP} with the following fixed (i.e., independent of iteration $k$) parameters.
\begin{enumerate}[label=(\alph*)]
\item For $\gamma = 1/(L\sqrt{t})$ and $B = 1$, we have that
$$\E\left[\frac{1}{t}\sum_{k = 1}^t \|\xbm^{k-1}-\Psf(\xbm^{k-1})\|_2^2\right]\leq \frac{A}{\sqrt{t}},$$
\item For $\gamma = 1/L$ and $B = t$, we have that
$$\E\left[\frac{1}{t}\sum_{k = 1}^t \|\xbm^{k-1}-\Psf(\xbm^{k-1})\|_2^2\right]\leq \frac{A}{\sqrt{t}},$$
\item For $\gamma = 1/(L\sqrt{t})$ and $B = t$, we have that
$$\E\left[\frac{1}{t}\sum_{k = 1}^t \|\xbm^{k-1}-\Psf(\xbm^{k-1})\|_2^2\right]\leq \frac{A}{t},$$
\end{enumerate}
where
$$A \defn 2\left(\frac{1+\theta}{1-\theta}\right) \left(\|\xbm^0-\xbmast\|_2 + \frac{\nu}{L}\right)^2.$$
\end{corollary}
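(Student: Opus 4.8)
The plan is to treat Corollary~\ref{Cor:StochPnP} as a pure specialization of Proposition~\ref{Prop:StochPnP}: no new estimates are required, only substitution of the prescribed $\gamma$ and $B$ followed by a routine regrouping of terms. To keep the bookkeeping transparent, I would abbreviate $R \defn \|\xbm^0-\xbmast\|_2$ and $C \defn 2\left(\frac{1+\theta}{1-\theta}\right)$, so that the conclusion of Proposition~\ref{Prop:StochPnP} reads
$$\E\left[\frac{1}{t}\sum_{k=1}^t \|\xbm^{k-1}-\Psf(\xbm^{k-1})\|_2^2\right] \leq C\left[\frac{\gamma^2\nu^2}{B} + \frac{2\gamma\nu}{\sqrt{B}}\,R + \frac{R^2}{t}\right],$$
and I would record at the outset that the target constant factors as a perfect square, $A = C\left(R + \nu/L\right)^2 = C\left(R^2 + \tfrac{2\nu}{L}R + \tfrac{\nu^2}{L^2}\right)$. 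This last identity is the real target: the whole argument amounts to showing that, after substitution, the bracket on the right collapses into $R^2 + \tfrac{2\nu}{L}R + \tfrac{\nu^2}{L^2}$ up to the claimed rate factor.

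For parts (a) and (b) I would substitute and observe that the two parameter choices produce term-by-term identical expressions: in case (a), $\gamma=1/(L\sqrt{t})$ with $B=1$ gives the three terms $\nu^2/(L^2 t)$, $2\nu R/(L\sqrt{t})$, and $R^2/t$, and in case (b), $\gamma=1/L$ with $B=t$ gives exactly the same three terms. In both cases I would factor out $1/\sqrt{t}$ to obtain $C\,t^{-1/2}\big[\nu^2/(L^2\sqrt{t}) + 2\nu R/L + R^2/\sqrt{t}\big]$ and then use $t\geq 1$, hence $1/\sqrt{t}\leq 1$, to bound the two residual $t$-dependent coefficients by their $t$-free counterparts. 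The bracket is then at most $R^2 + 2\nu R/L + \nu^2/L^2 = (R+\nu/L)^2$, which yields the stated bound $A/\sqrt{t}$. Part (c), with $\gamma=1/(L\sqrt{t})$ and $B=t$, proceeds identically except that the extra factor of $1/\sqrt{t}$ from the smaller step and the factor $1/\sqrt{t}$ from $\sqrt{B}=\sqrt{t}$ improve the rate: the three terms become $\nu^2/(L^2 t^2)$, $2\nu R/(Lt)$, and $R^2/t$, and factoring out $1/t$ followed by the same $1/t\leq 1$ estimate gives $A/t$.

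The only point requiring any care --- and it is the place where the analysis could silently go wrong --- is the use of $t\geq 1$ to discard the sub-dominant $t$-factors so that all three terms assemble into the single square $(R+\nu/L)^2$. The substitution must leave exactly the combination $\nu/L$ (rather than a bare $\nu$ or a mismatched power of $L$) multiplying $R$ and appearing squared; this is precisely why the step-size is scaled as $\gamma\propto 1/L$, since it is the factor $\gamma\nu$ together with $\gamma^2\nu^2$ that must line up with $\nu/L$ and $\nu^2/L^2$. I would therefore carry the $L$-dependence explicitly through each substitution and verify the perfect-square matching before invoking $t\geq 1$.
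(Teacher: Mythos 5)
Your proposal is correct and takes essentially the same route as the paper, which proves the corollary at the end of Appendix~\ref{Sec:StochPnPConvergence} by substituting the given $\gamma$ and $B$ into the bound of Proposition~\ref{Prop:StochPnP} and invoking $1/t \leq 1/\sqrt{t}$ and $1/t^2 \leq 1/t$ to collect the three terms into the perfect square $\left(\|\xbm^0-\xbmast\|_2 + \nu/L\right)^2$. Your variant of factoring out the rate and then using $1/\sqrt{t} \leq 1$ (resp.\ $1/t \leq 1$) is merely a cosmetic rearrangement of those same inequalities, and your term-by-term checks for (a)--(c) match the paper's computation.
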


\noindent
Corollary~\ref{Cor:StochPnP}(c) implies the worst-case convergence rate
\begin{equation}
\label{Eq:MinConv}
\E\left[\min_{k \in \{1,\dots, t\}}\left\{\|\xbm^{k-1}-\Psf(\xbm^{k-1})\|_2^2\right\}\right] = O(1/t),
\end{equation}
which means that under Assumption~\ref{As:Assumption2} and with a particular selection of parameters $B$ and $\gamma$, the iterates of $\proposed$ (in expectation) can get arbitrarily close to $\fix(\Psf)$ as $O(1/t)$.

\begin{figure}[t]
\begin{center}
\includegraphics[width=8.5cm]{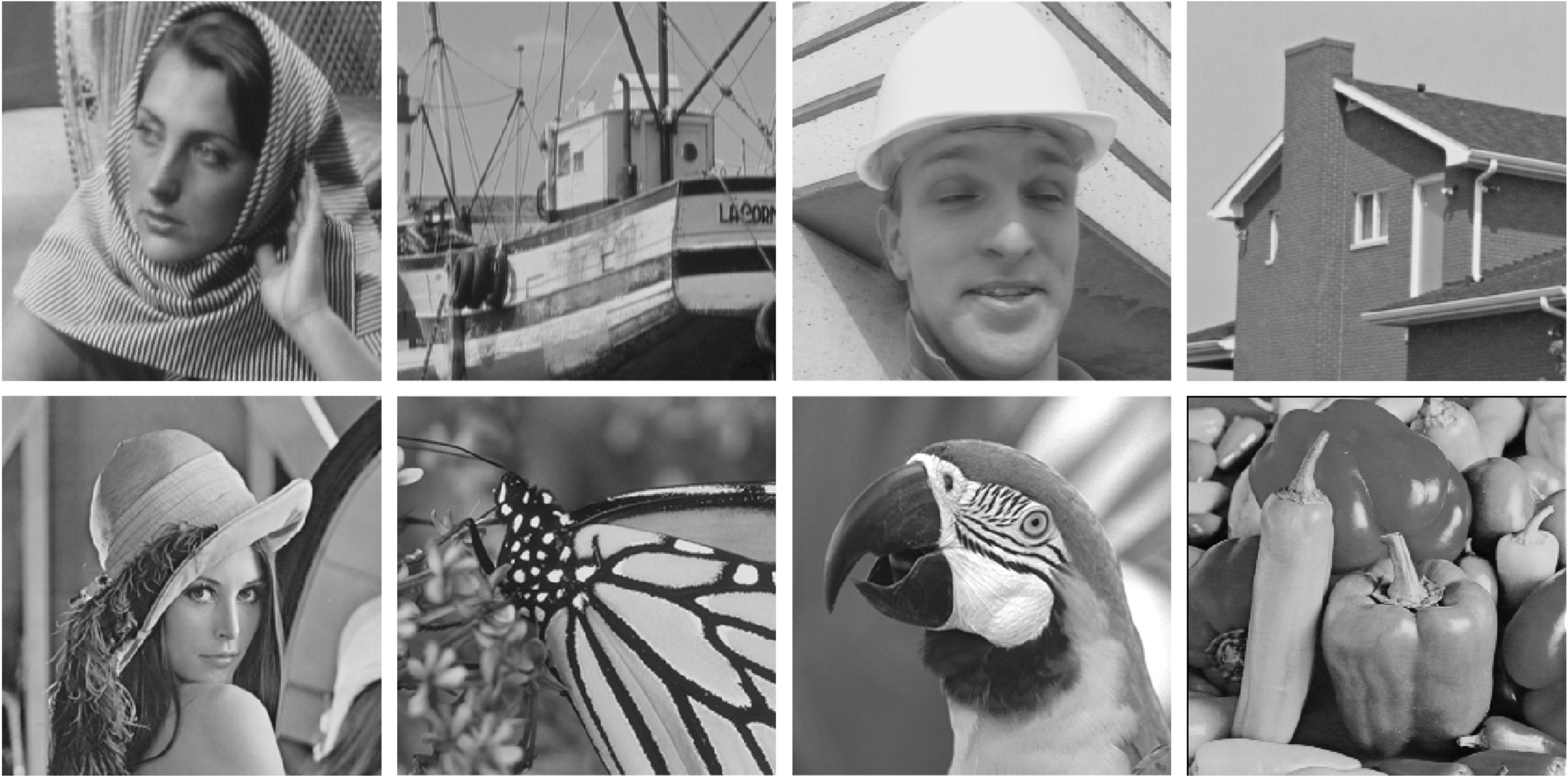}
\end{center}
\caption{Test images used. Top row from left to right: \emph{Babara}, \emph{Boat}, \emph{Foreman}, \emph{House}. Bottom row from left to right: \emph{Lenna}, \emph{Monarch}, \emph{Parrot}, \emph{Pepper}.}
\label{Fig:TestImages}
\end{figure}


\section{Numerical Simulations}
\label{Sec:Simulations}

We now empirically validate \proposed~in the context of diffraction tomography (DT) using three popular denoisers: TV~\cite{Rudin.etal1992}, BM3D~\cite{Dabov.etal2007}, and TNRD~\cite{Chen.Pock2016}. Our goal is not to justify the PnP framework, as its benefits have been well illustrated in prior work~\cite{Venkatakrishnan.etal2013, Sreehari.etal2016, Kamilov.etal2017}, but to focus on the aspects that relate to online processing of data. Therefore, we first discuss empirical convergence of $\proposed$, and then highlight the benefit of using it for processing a large number of measurements. 

\begin{figure*}[t]
\begin{center}
\includegraphics[width=16cm]{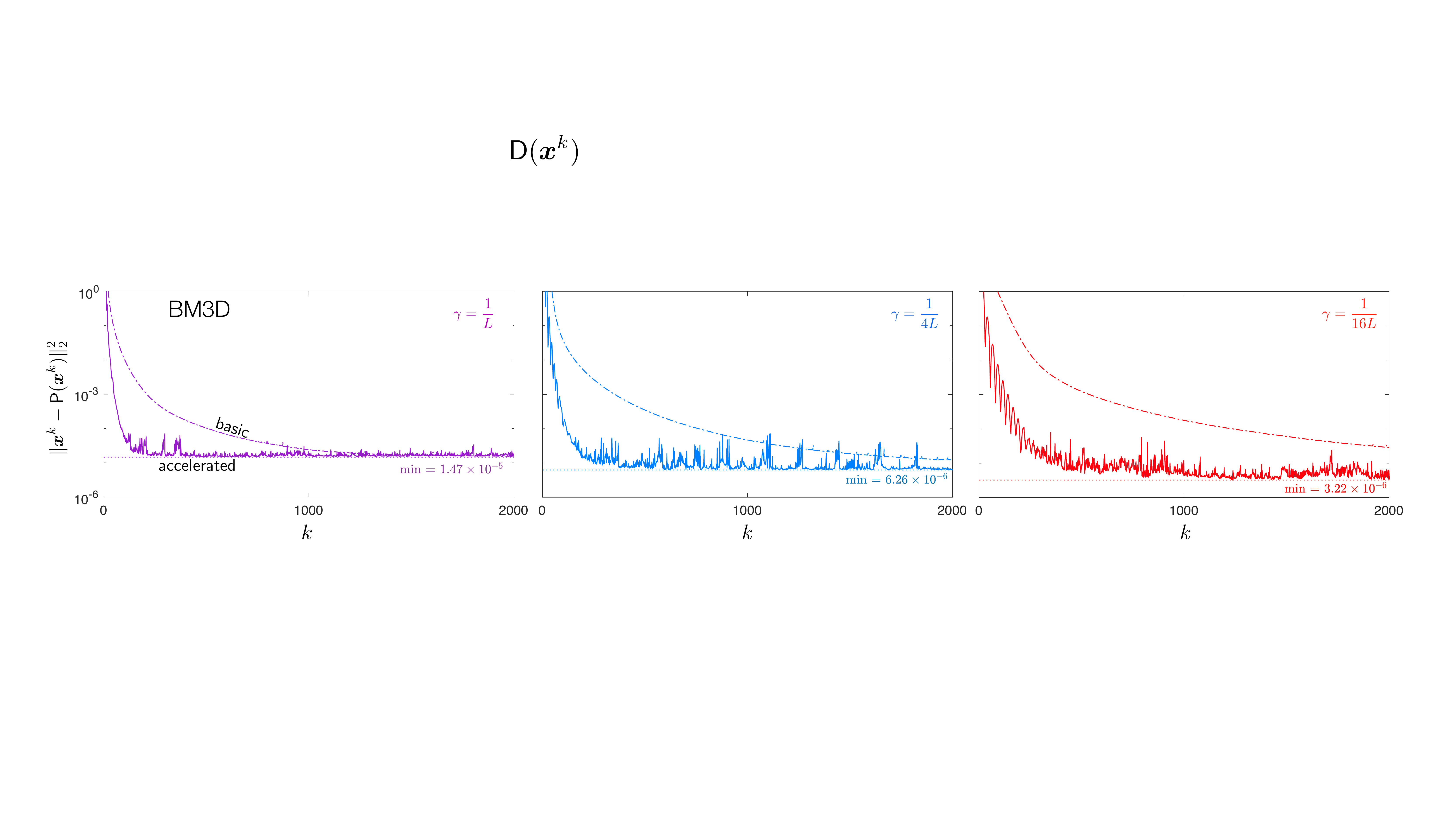}
\end{center}
\caption{Illustration of the influence of the step-size $\gamma$ on the convergence of $\proposed$ under BM3D. The distance to a fixed point is plotted against the iteration number for 3 distinct step-sizes for both accelerated (dashed) and basic (solid) variants of $\proposed$ for $B = 30$. The dotted line at the bottom shows the minimal distance to a fixed point attained by the algorithm. This plot illustrates that the empirical performance of $\proposed$ under BM3D is consistent with Proposition~\ref{Prop:StochPnP}, where the accuracy improves with smaller $\gamma$.}
\label{Fig:StepBM3D}
\end{figure*}

\begin{figure*}[t]
\begin{center}
\includegraphics[width=16cm]{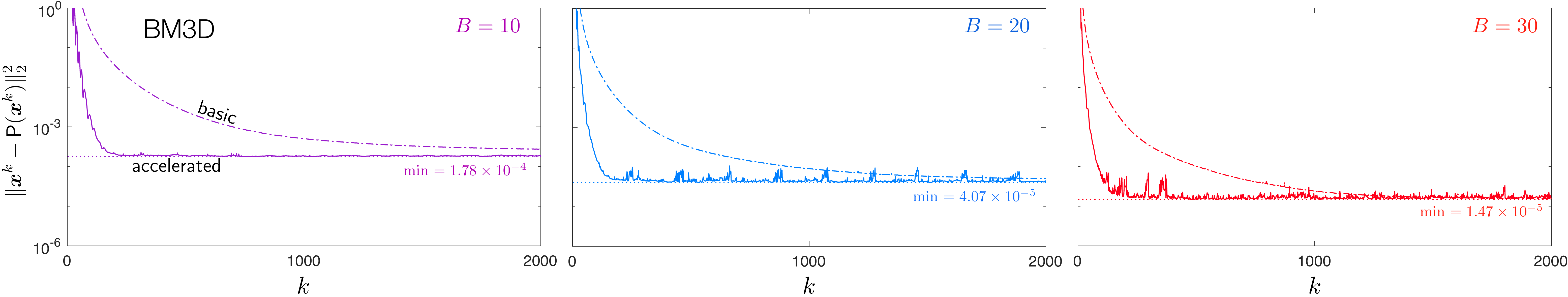}
\end{center}
\caption{Illustration of the influence of the minibatch size $B$ on the convergence of $\proposed$ under BM3D. The distance to a fixed point is plotted against the iteration number for 3 distinct minibatch sizes for both accelerated (dashed) and basic (solid) variants of $\proposed$ for $\gamma = 1/L$. The dotted line at the bottom shows the minimal distance to a fixed point attained by the algorithm. This plot illustrates that the empirical performance of $\proposed$ using BM3D is consistent with Proposition~\ref{Prop:StochPnP}, where the accuracy improves with larger $B$.}
\label{Fig:BatchBM3D}
\end{figure*}

\subsection{Diffraction tomography}

DT is a technique used to form an image of the distribution of dielectric permittivity within an object from multiple of measurements of light it scatters~\cite{Wolf1969, Kak.Slaney1988}. This problem is common in a number of applications---including ultrasound~\cite{Bronstein.etal2002} and optical microscopy~\cite{Sung.etal2009}---and is known to be highly data-intensive. A typical reconstruction task uses hundreds or thousands of measurements for forming a single image. As is common in DT, we adopt the first-Born approximation~\cite{Wolf1969}, which leads to the linear inverse problem formulation of image reconstruction.

Note that $\proposed$ is applicable beyond DT and our choice of the latter is only due to the fact that image reconstruction in DT requires the processing of a large number of distinct measurements. Additionally, our focus is not on the experimental application of DT, but rather on the demonstration of our online algorithm for image reconstruction. Hence, we restrict our study here to image reconstruction from purely simulated DT data, which enables optimal parameter tuning and quantitative comparisons. 

Consider an object with the permittivity distribution $\epsilon(\rbm)$ within a bounded domain ${\Omega \subseteq \R^2}$ with a background medium of permittivity $\epsilon_b$. The object is illuminated with a monochromatic and coherent incident electric field $\uin(\rbm)$ emitted by one of $N$ transmitters. The incident field is assumed to be known both inside $\Omega$ and at the sensor domain ${\Gamma \subseteq \R^2}$. The measurements correspond to the field scattered by the object recorded by $M$ receivers located within $\Gamma$. Under the first-Born approximation, the measurement matrix for a single illumination can be represented as $\Hbm = \Sbf \mathsf{diag}(\ubfin)$, where ${\ubfin \in \C^N}$ is the input field $\uin$ inside $\Omega$, and $\Sbf \in \C^{M \times N}$ is the discretization of the Green's function evaluated at $\Gamma$~\cite{Liu.etal2018}. In practice, the image reconstruction relies on the set of illuminations $\{\mathbf{u}_{\text{\tiny in}}^i\}_{i \in \{1, \dots, I\}}$, with each individual illumination resulting in a measurement $\ybm^i \in \C^M$ and a distinct measurement matrix $\Hbm_i$.

The objects we reconstruct correspond to the eight standard grayscale images 
shown Fig.~\ref{Fig:TestImages}. The physical size of an image is set to 18 cm $\times$ 18 cm, discretized to a grid of $256 \times 256$. The wavelength of the illumination was set to $\lambda = 0.84$ cm and the background medium was assumed to be air with $\epsilon_b = 1$. We additionally set the number of transmitters to $N = 60$, distributed uniformly along a circle of radius $1.6$ meters, and for each illumination, the corresponding scattered field is measured by $M = 360$ receivers around the object. The simulated measurements were additionally corrupted by an additive white Gaussian noise (AWGN) corresponding to 40 dB of input signal-to-noise ratio (SNR). SNR is also used as a quantitative metric for numerically evaluating the reconstruction quality in the experiments. We use the term \emph{average SNR} to indicate the SNR averaged over all the test images. In each experiment, all algorithmic hyperparameters were optimized for the best SNR performance with respect to the ground truth test image.

\begin{figure}[t]
\begin{center}
\includegraphics[width=7cm]{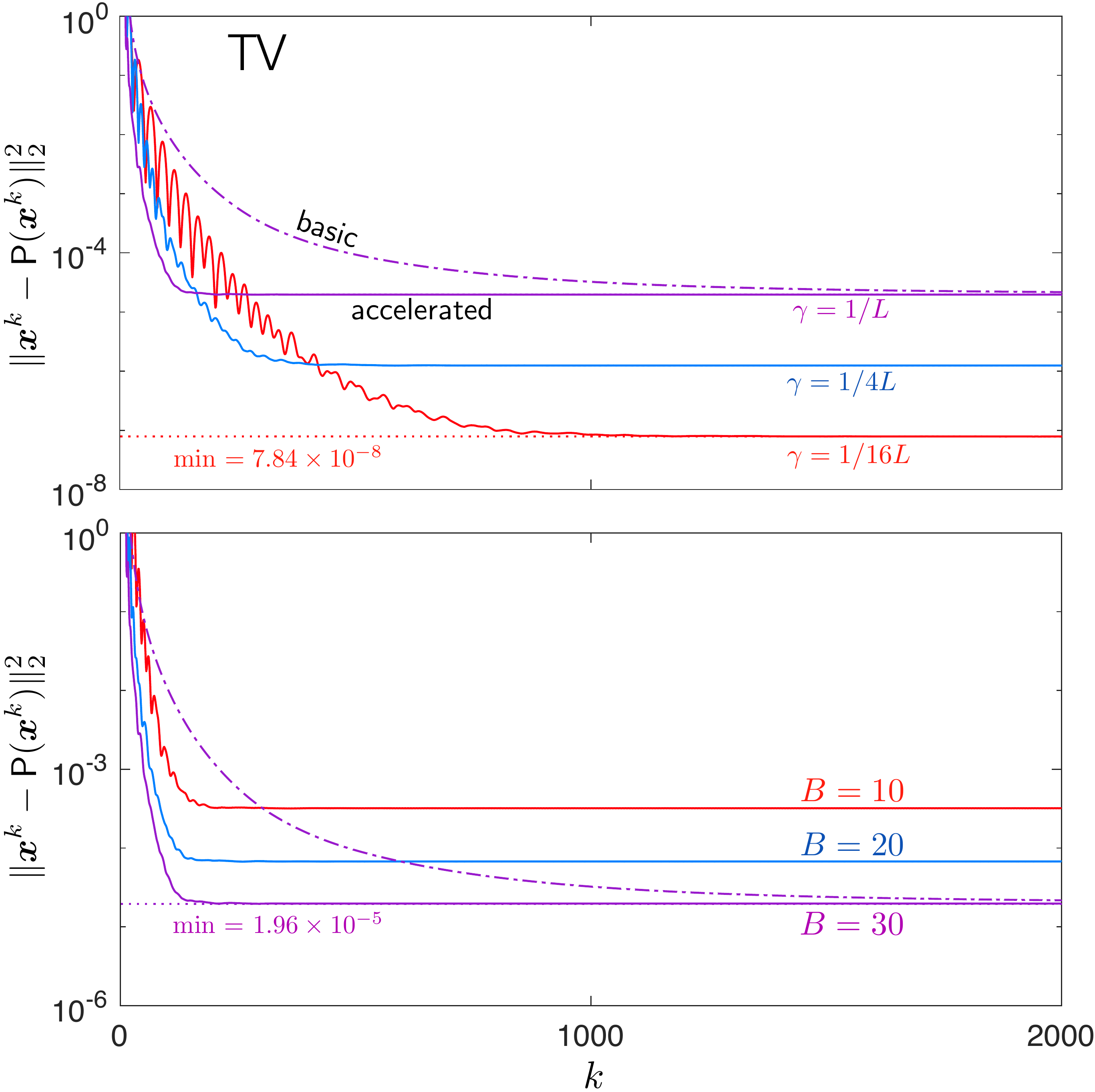}
\end{center}
\caption{Illustration of the influence of the step and minibatch sizes on the convergence of the \proposed~under TV. The dotted line at the bottom shows the minimal distance to a fixed point attained by the algorithm. A proximal operator is $1/2$-averaged, which means that it perfectly satisfies the assumptions of Proposition~\ref{Prop:StochPnP}.}
\label{Fig:TestTV}
\end{figure}

\begin{table}[t]
\centering
\scriptsize
\textbf{\caption{\label{Tab:Distance} Minimal distance averaged over the test image set}}
\begin{tabular*}{9.1cm}{L{0pt}C{24pt}C{24pt}C{24pt}|C{24pt}C{24pt}C{24pt}} \toprule
\multicolumn{1}{l}{\textbf{Denoiser}} & \multicolumn{3}{c}{\textbf{Step size} ($\gamma$)} & \multicolumn{3}{c}{\textbf{Mini-batch size} ($B$)} \\
\cmidrule{2-4} \cmidrule{5-7} 
& $1/L$ & $1/4L$ & $1/16L$ & 10 & 20 & 30   \\
\cmidrule{1-7}
\textbf{TV}   & 1.96e-5 & 1.47e-6 & 7.83e-8 & 3.18e-4 & 6.71e-5 & 1.96e-5 \\
\textbf{BM3D} & 1.47e-5 & 6.26e-6 & 3.22e-6 & 1.78e-4 & 4.07e-5 & 1.47e-5 \\
\textbf{TNRD} & 4.20e-2 & 9.18e-3 & 1.44e-3 & 3.12e-1 & 1.14e-1 & 4.20e-2 \\ \bottomrule
\end{tabular*}
\end{table}

\subsection{Convergence of PnP-SGD}

One of the key conclusions of Proposition~\ref{Prop:StochPnP} is that the final accuracy of $\proposed$~to a fixed point is proportional to the step size and inversely proportional to the minibatch size. In order to numerically evaluate the convergence, we define the distance to $\fix(\Psf)$ at the $k$th iteration as
\begin{equation}
\label{Eq:DistToFixPoint}
\mathsf{dist}(\xbm^k) \defn \| \xbm^k - \Psf(\xbm^k) \|_2^2\;,
\end{equation}
where $\Psf$ is given by~\eqref{Eq:ProxDenOp}. As the sequence $\{\xbm^k\}$ approaches $\mathsf{fix}(\Psf)$, $\mathsf{dist}(\xbm^k)$ approaches zero.

Fig.~\ref{Fig:StepBM3D} and Fig.~\ref{Fig:BatchBM3D} empirically evaluate the evolution of the distance to a fixed point for different step and minibatch sizes, respectively. $\proposed$ under BM3D is run until convergence with $\gamma \in \{1/L, 1/(4L), 1/(16L)\}$ and $B \in \{10, 20, 30\}$. Here, the quantity ${L > 0}$ denotes the Lipschitz constant, which, for linear inverse problems, corresponds to the squared largest singular value of the measurement matrix~\cite{Beck.Teboulle2009}. We show the performance of both basic and accelerated variants of $\proposed$, where the latter is obtained by setting $\{q_k\}$ as in~\eqref{Eq:Accelerate}. The plots clearly illustrate the improvement in final accuracy for smaller $\gamma$ and larger $B$, which is consistent with Proposition~\ref{Prop:StochPnP}. Additionally, they indicate that the convergence is significantly improved when using the accelerated variant of the algorithm. Note that our theoretical analysis does not predict monotonic reduction of the distance, which also seems to be consistent with the empirical performance of $\proposed$. In Fig.~\ref{Fig:TestTV}, we provide a reference plot showing the performance of $\proposed$ under TV, which is a valid proximal operator and hence is known to be a $1/2$-averaged operator. We can again observe that the convergence behavior of $\proposed$ is consistent with Proposition~\ref{Prop:StochPnP}. Finally, the summary in Table~\ref{Tab:Distance}, highlights the same convergence trends for all three algorithms, where both $\gamma$ and $B$ control the accuracy of $\proposed$.

\begin{figure}[t]
\begin{center}
\includegraphics[width=7.5cm]{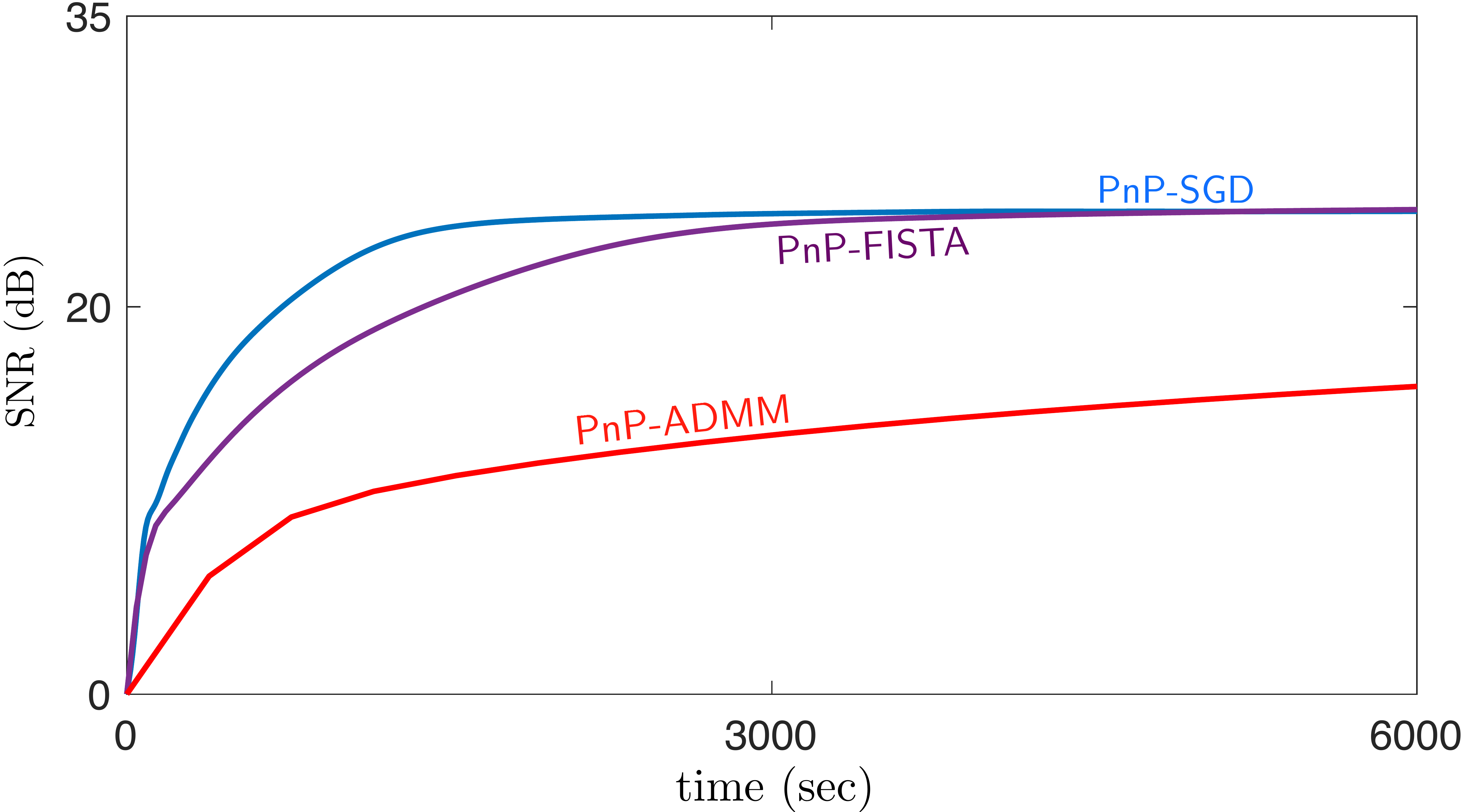}
\end{center}
\caption{Comparison between the batch and online PnP algorithms for a fixed reconstruction time. SNR (dB) is plotted against the time in seconds for three algorithms: \proposed, PnP-FISTA, and PnP-ADMM. Both PnP-FISTA and PnP-ADMM use the full set of 60 illuminations at every iteration, while $\proposed$ uses a random subset of $10$ illuminations. This lower per-iteration cost, leads to a substantially faster convergence of $\proposed$.}
\label{Fig:TimeLimit}
\end{figure}

\begin{figure}[t]
\begin{center}
\includegraphics[width=7.5cm]{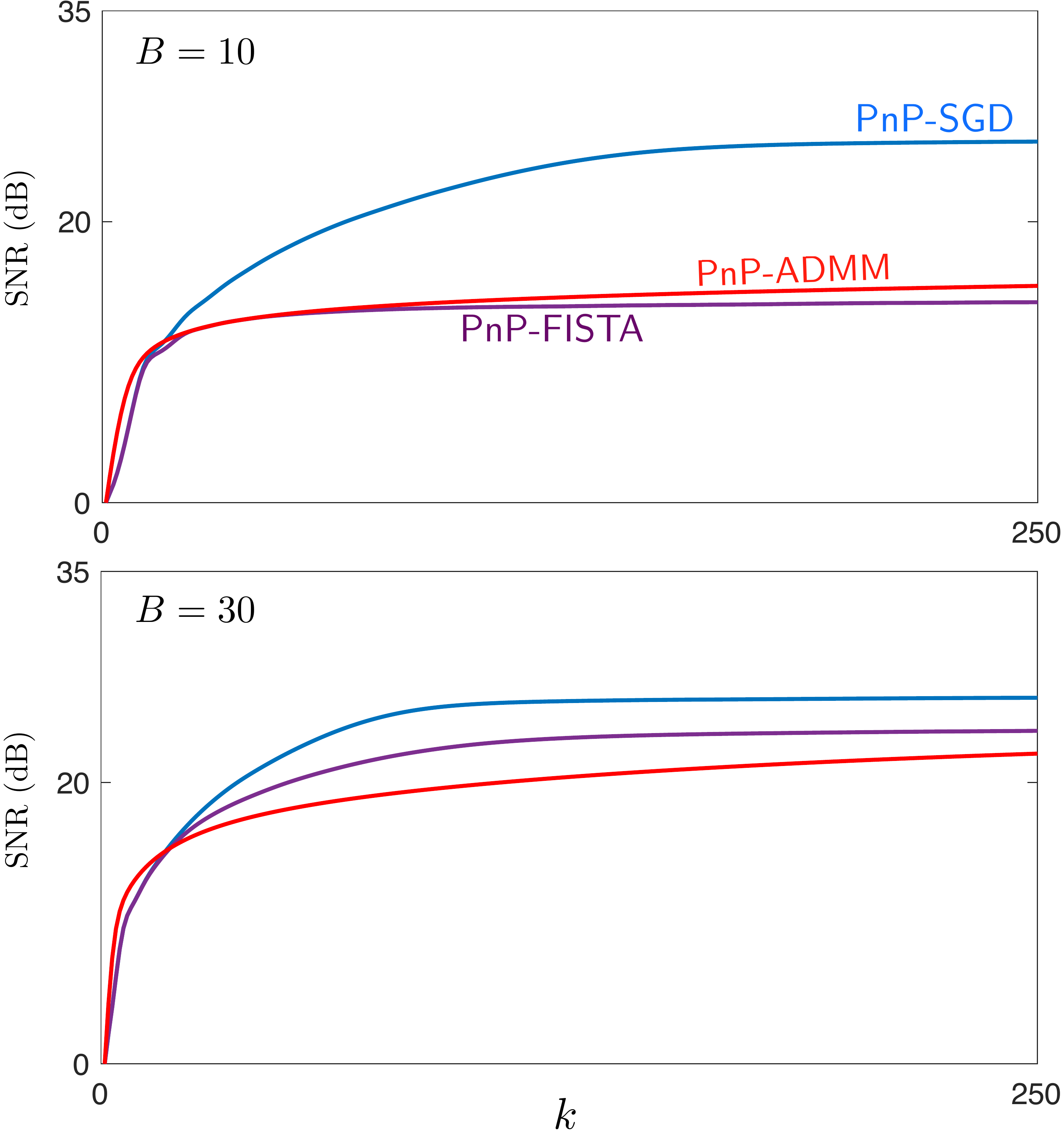}
\end{center}
\caption{Comparison between the batch and online PnP algorithms under a fixed measurement budget. SNR (dB) is plotted against the number of iterations for three algorithms: \proposed, PnP-FISTA, and PnP-ADMM. The top and bottom figures show the performance when the budget is 10 and 30 illuminations, respectively. The plot illustrates that for the same per iteration cost, $\proposed$ can significantly outperform its batch counterparts.}
\label{Fig:ComputeLimit}
\end{figure}

\begin{figure*}[t]
\begin{center}
\includegraphics[width=16cm]{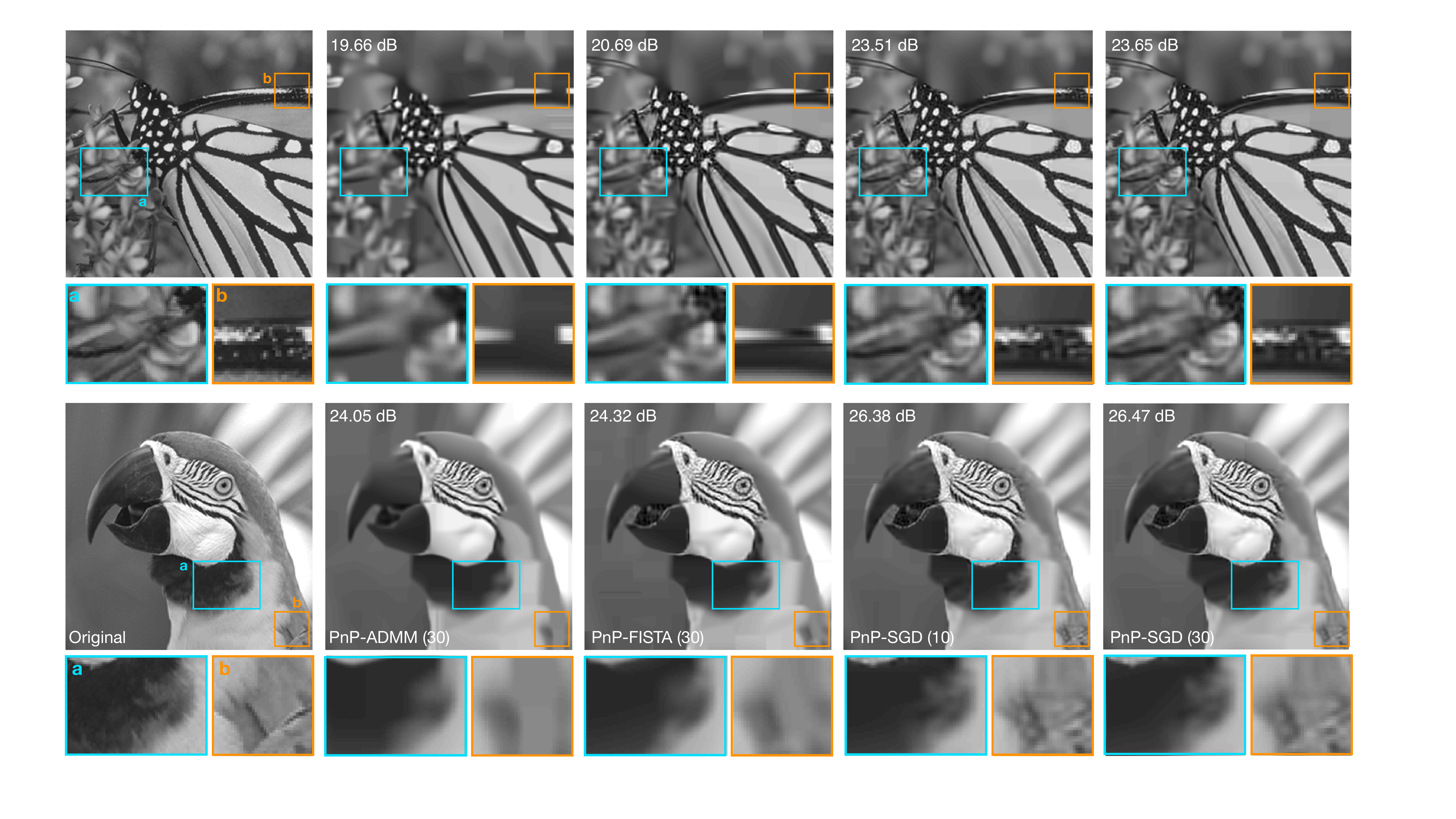}
\end{center}
\caption{Visual illustration of the reconstructed \emph{Monarch} and \emph{Parrot} images obtained using the \proposed, PnP-FISTA, and PnP-ADMM, all under BM3D. The original images are displayed in the first column. The second and the third columns show the results of PnP-FISTA and PnP-ADMM with the budget of 30 illuminations, and the fourth and the fifth columns present the results of the $\proposed$ with the budget of 10 and 30 illuminations. Visual differences are highlighted using the rectangles drawn inside the images. Each reconstruction is labeled with its SNR (dB) value with respect to the original image.}
\label{Fig:VisualExample}
\end{figure*}

\subsection{Benefits of online processing}

We now highlight the higher efficiency of $\proposed$ against PnP-ISTA and PnP-ADMM for larger number of measurements. Specifically, we consider two scenarios where: (a) the total time budget is fixed; (b) the number of measurements is fixed. While we use BM3D as our plug-in operator of choice, we note that our observations here directly generalize to any other denoiser.

Fig.~\ref{Fig:TimeLimit} compares the average reconstruction SNR of $\proposed$, PnP-FISTA, and PnP-ADMM for a fixed run-time. The batch algorithms use the full $60$ illuminations at every iteration, while $\proposed$ uses only $B = 10$ illuminations per iteration. This gives $\proposed$ a significantly lower per iteration cost compared to the batch algorithms. Specifically, the average per iteration time for \proposed, PnP-FISTA, and PnP-ADMM was 8.86 seconds, 44.94 seconds, and 382.83 seconds, respectively. The higher cost of PnP-ADMM is the result of the forward model inversion in~\eqref{Eq:ADMMDataUpdate}. This figure illustrates that, in practice, even with $B = 10$, the solution of $\proposed$ is sufficiently close to that of the batch algorithm. Additionally, $\proposed$ achieves a significant speedup due to the reduction in per-iteration complexity. This indicates to the potential of the algorithm for efficient image reconstruction from a large number of measurements.

Fig.~\ref{Fig:ComputeLimit} compares the average reconstruction SNR of $\proposed$, PnP-FISTA, and PnP-ADMM for a fixed per-iteration measurement budget. Both batch algorithm are allowed to use only 10 (top figure) or 30 (bottom figure) uniformly distributed illuminations. Similarly, $\proposed$ uses the same number of illuminations per iteration, but randomly cycles through all the measurements. This means that in each figure both $\proposed$ and PnP-FISTA have the same per-iteration computational complexity. The computational complexity of PnP-ADMM is higher due to the need to invert the measurement matrix. Table~\ref{Tab:SNR} shows the final SNR obtained by all three algorithms on each individual image in the dataset. Additionally, two visual illustrations on \emph{Monarch} and \emph{Parrot} are shown in Fig.~\ref{Fig:VisualExample}. As expected, $\proposed$ achieves dramatically higher SNR  compared to batch algorithms, since it makes use of the full set of measurements. Additionally, we note the comparable final SNR performance of $\proposed$ with $B = 10$ and $B = 30$, with the latter leading to a faster convergence speed. These results again highlight the potential of $\proposed$ for large-scale PnP image reconstruction.

To conclude this section, let us put the results here in the context of our theoretical analysis. Proposition~\ref{Prop:StochPnP} reveals that $\proposed$ converges to the same set of fixed points $\fix(\Psf)$ as PnP-ISTA and PnP-ADMM, up to a term that depends on the minibatch size $B \geq 1$. Larger $B$ leads to a higher accuracy of $\proposed$ with respect to $\fix(\Psf)$, which was empirically confirmed in Fig.~\ref{Fig:BatchBM3D}. The SNR results here additionally reveal that even with a relatively small $B$, $\proposed$ is accurate in terms of image quality. For example, in Table~\ref{Tab:SNR}, we can observe that the average SNR difference between $\proposed$ with $B = 10$ and $B = 30$ is within 0.2 dB of each other. Additionally, in Fig.~\ref{Fig:TimeLimit}, we observe that the batch and online algorithms approximately achieve the same final SNR performance. These observations suggest that while there is an order of magnitude difference in accuracy between $B = 10$ and $B = 30$ when measured in terms of the distance to a fixed point (see Fig.~\ref{Fig:BatchBM3D}), the difference is relatively mild when measured in terms of image quality (see Fig.~\ref{Fig:VisualExample}), with smaller $B$ nearly matching the image quality of the batch algorithm.

\begin{table}[t]
\centering
\scriptsize
\textbf{\caption{Individual reconstruction SNRs for each image.\label{Tab:SNR}}}
\begin{tabular*}{8.5cm}{L{22pt}C{22pt}C{22pt}C{22pt}C{22pt}C{22pt}C{22pt}} \toprule
\textbf{Images} & $\mathsf{PnP}$-$\mathsf{ADMM}$ ($\mathsf{10}$) &  $\mathsf{PnP}$-$\mathsf{ADMM}$ ($\mathsf{30}$) & $\mathsf{PnP}$-$\mathsf{FISTA}$ ($\mathsf{10}$) & $\mathsf{PnP}$-$\mathsf{FISTA}$ ($\mathsf{30}$) & $\mathsf{PnP}$-$\mathsf{SGD}$ ($\mathsf{10}$) & $\mathsf{PnP}$-$\mathsf{SGD}$ ($\mathsf{30}$)\\
\cmidrule{1-7}
Babara	& 15.62 & 21.18 & 13.32 & 20.21 & 23.61 & 23.90 \\
Boat		& 15.94 & 23.10 & 13.69 & 22.01 & 24.87 & 25.15 \\ 
Foreman	& 23.10 & 29.19 & 18.46 & 28.61 & 29.61 & 29.80 \\
House	& 19.23 & 26.43 & 15.68 & 26.79 & 28.29 & 28.41 \\
Lenna	& 15.52 & 23.17 & 13.49 & 22.91 & 25.30 & 25.38 \\
Monarch	& 11.46 & 19.66 & 8.80   & 20.69 & 23.51 & 23.65 \\
Parrot	& 17.29 & 24.05 & 13.72 & 24.32 & 26.38 & 26.47 \\
Pepper	& 15.49 & 22.90 & 11.68 & 22.96 & 24.92 & 25.15 \\
\cmidrule{1-7}
\textbf{Average} & 16.71 & 23.71 & 14.26 & 23.73 & 25.85 & 26.04 \\
\bottomrule
\end{tabular*}
\end{table}


\section{Conclusion}
\label{Sec:Conclusion}

The online PnP algorithm developed in this paper is beneficial in the context of large-scale image reconstruction, when the amount of data is too large to be processed jointly. We presented an in-depth theoretical convergence analysis for both batch and online variants of PnP-ISTA. Our work represents a substantial extension of the current convergence theory of PnP-algorithms for image reconstruction. Related experiments are also presented to empirically confirm the proposed propositions and to elucidate the higher efficiency of \proposed~in different representative situations. Future work will aim to apply the algorithm to other image reconstruction tasks, relax some of the assumptions, and extend the theoretical results in this paper to ADMM and FISTA.


\section{Appendix}

\subsection{Preliminaries}
\label{Sec:Prelims}

We start by reviewing the key concepts useful for our analysis. A more complete description of these ideas can be found in literature~\cite{Bauschke.Combettes2010, Parikh.Boyd2014, Ryu.Boyd2016}. 

We will represent denoisers as functions $\Dsf_\sigma: \R^n \rightarrow \R^n$ that depend on $\sigma > 0$. We will also use a shorthand notation $\Gsf_\gamma \defn \Isf - \gamma \nabla d$ to denote the gradient-step operator, where $\Isf$ denotes the identity operator. We will assume that all operators are defined everywhere on $\R^n$.
\begin{definition}
An operator $\Fsf$ is Lipschitz continuous with a constant $L > 0$ if
\begin{equation}
\|\Fsf(\xbm)-\Fsf(\ybm)\|_2 \leq L \|\xbm - \ybm\|_2, \quad \forall \xbm, \ybm \in \R^n.
\end{equation}
When $L = 1$, $\Fsf$ is said to be nonexpansive.
\end{definition}

\noindent
It is straightforward to show that given two operators $\Fsf_1$ and $\Fsf_2$ with Lipschitz constants $L_1$ and $L_2$, respectively, the composition $\Fsf \defn \Fsf_2 \circ \Fsf_1$ has Lipschitz constant $L = L_1L_2$. This means that the composition of two nonexpansive operators is also nonexpansive.

\begin{definition}
We say that $\xbmast \in \R^n$ is a fixed point of $\Fsf$ is $\xbmast = \Fsf(\xbmast)$. We denote the set of fixed points of an operator $\Fsf$ as $\fix(\Fsf) \defn \{\xbm \in \R^n: \xbm = \Fsf(\xbm)\}$.
\end{definition} 

\noindent
Note that the iteration of a nonexpansive operator does not necessarily converge. To see this consider a nonexpansive operator $\Fsf = - \Isf$, where $\Isf$ is the identity. However, the Krasnosel'skii-Mann theorem (see Theorem 5.15 in~\cite{Bauschke.Combettes2010}) states that the iteration of the damped operator ${\Dsf \defn (1-\alpha)\Isf + \alpha \Fsf}$, for $\alpha \in (0, 1)$, will converge to $\fix(\Fsf)$. This idea is further formalized with the definition of the following class of operators.
\begin{definition}
For a constant $\alpha \in (0, 1)$, we say that the operator $\Dsf$ is $\alpha$-averaged, if there exists a nonexpansive operator $\Fsf$ such that $\Dsf = (1-\alpha)\Isf + \alpha \Fsf.$
\end{definition}

\noindent
An important result from convex analysis is that the proximal operator is $(1/2)$-averaged (see p.~132 in~\cite{Parikh.Boyd2014}). Similarly, when $d$ is convex and has a Lipschitz continuous gradient of constant $L$, the gradient-step operator $\Gsf_\gamma$ is $(\gamma L/2)$-averaged for any ${\gamma \in (0, 2/L)}$ (see p.~17 in~\cite{Ryu.Boyd2016}). As stated next, the composition of two averaged operators is also averaged.
\begin{proposition}
\label{Thm:Composition}
Let $\Fsf_1$ be $\alpha_1$-averaged and $\Fsf_2$ be $\alpha_2$-averaged. Then, the composite operator $\Fsf \defn \Fsf_2 \circ \Fsf_1 = \Fsf_2 \Fsf_1$ is
\begin{equation}
\label{Eq:Composition}
\alpha \defn \frac{\alpha_1 + \alpha_2 - 2\alpha_1\alpha_2}{1-\alpha_1\alpha_2}
\end{equation}
averaged operator.
\end{proposition}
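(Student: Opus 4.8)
The plan is to avoid manipulating the underlying nonexpansive operators directly and instead work through the equivalent quadratic characterization of averagedness. First I would establish that an operator $\Tsf$ is $\alpha$-averaged if and only if
\begin{equation*}
\|\Tsf\xbm-\Tsf\ybm\|_2^2 \;\leq\; \|\xbm-\ybm\|_2^2 - \frac{1-\alpha}{\alpha}\,\|(\Isf-\Tsf)\xbm-(\Isf-\Tsf)\ybm\|_2^2
\end{equation*}
holds for all $\xbm,\ybm\in\R^n$. Writing $\Tsf=(1-\alpha)\Isf+\alpha\Nsf$ and expanding both sides in terms of $\xbm-\ybm$ and $\Nsf\xbm-\Nsf\ybm$, one finds that the left-hand side plus the residual term collapses to $(1-\alpha)\|\xbm-\ybm\|_2^2+\alpha\|\Nsf\xbm-\Nsf\ybm\|_2^2$; this is bounded by $\|\xbm-\ybm\|_2^2$ precisely when $\Nsf$ is nonexpansive. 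Since every step of this computation is reversible, the displayed inequality is both necessary and sufficient for $\alpha$-averagedness, and it is the sufficiency that I will invoke at the very end.

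Next I would apply this characterization to the two factors. Set $\pbm\defn\Fsf_1\xbm$ and $\kbm\defn\Fsf_1\ybm$, so that $\Fsf\xbm=\Fsf_2\pbm$ and $\Fsf\ybm=\Fsf_2\kbm$, and introduce the two residual vectors
\begin{equation*}
\ubf \defn (\xbm-\pbm)-(\ybm-\kbm), \qquad \vbf \defn (\pbm-\Fsf_2\pbm)-(\kbm-\Fsf_2\kbm).
\end{equation*}
The key bookkeeping observation is that these telescope: the residual of the composite operator satisfies $(\xbm-\Fsf\xbm)-(\ybm-\Fsf\ybm)=\ubf+\vbf$. Applying the characterization to $\Fsf_1$ (with constant $c_1\defn(1-\alpha_1)/\alpha_1$) and to $\Fsf_2$ (with constant $c_2\defn(1-\alpha_2)/\alpha_2$) and adding the two resulting inequalities, the intermediate term $\|\pbm-\kbm\|_2^2$ cancels and I am left with
\begin{equation*}
\|\Fsf\xbm-\Fsf\ybm\|_2^2 \leq \|\xbm-\ybm\|_2^2 - c_1\|\ubf\|_2^2 - c_2\|\vbf\|_2^2.
\end{equation*}

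By the characterization again, now in its sufficiency direction, it remains to show that this right-hand side is bounded by $\|\xbm-\ybm\|_2^2-c\,\|\ubf+\vbf\|_2^2$ with $c\defn(1-\alpha)/\alpha$, i.e. that $c_1\|\ubf\|_2^2+c_2\|\vbf\|_2^2\geq c\,\|\ubf+\vbf\|_2^2$. Here I would compute $1-\alpha=(1-\alpha_1)(1-\alpha_2)/(1-\alpha_1\alpha_2)$ from the definition of $\alpha$, which yields the clean reciprocal identity $1/c=1/c_1+1/c_2$, equivalently $c=c_1c_2/(c_1+c_2)$. This immediately gives $c\leq c_1$, $c\leq c_2$, and $(c_1-c)(c_2-c)=c^2$, so that
\begin{align*}
c_1\|\ubf\|_2^2+c_2\|\vbf\|_2^2-c\|\ubf+\vbf\|_2^2 &= (c_1-c)\|\ubf\|_2^2 - 2c\langle\ubf,\vbf\rangle + (c_2-c)\|\vbf\|_2^2 \\
&= \big\|\sqrt{c_1-c}\,\ubf-\sqrt{c_2-c}\,\vbf\big\|_2^2 \;\geq\; 0,
\end{align*}
where the last equality uses $\sqrt{(c_1-c)(c_2-c)}=c$. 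This is the desired bound, and sufficiency of the characterization then certifies that $\Fsf$ is $\alpha$-averaged.

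The only genuinely delicate point is the algebra fixing $\alpha$: the proof works because the stated value of $\alpha$ is exactly the one making the leftover quadratic form in $\ubf,\vbf$ a perfect square, and the cleanest way to see this is the reciprocal identity $1/c=1/c_1+1/c_2$. Everything else---the expansion establishing the characterization and the telescoping of the residuals---is routine once the characterization is in hand. I would therefore spend the most care verifying $1-\alpha=(1-\alpha_1)(1-\alpha_2)/(1-\alpha_1\alpha_2)$ and the resulting perfect-square factorization, since these are what pin down the specific formula for $\alpha$ rather than merely some averaging constant.
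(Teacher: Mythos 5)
Your proof is correct, and it is worth noting that the paper does not actually prove Proposition~\ref{Thm:Composition} at all---it delegates to Proposition~4.44 of Bauschke and Combettes---so yours is a genuine self-contained replacement. I verified the three load-bearing computations: (i) the characterization, where with $a \defn \xbm-\ybm$, $b \defn \Nsf\xbm-\Nsf\ybm$ one indeed gets $\|\Tsf\xbm-\Tsf\ybm\|_2^2 + \tfrac{1-\alpha}{\alpha}\|(\Isf-\Tsf)\xbm-(\Isf-\Tsf)\ybm\|_2^2 = (1-\alpha)\|a\|_2^2 + \alpha\|b\|_2^2$, which is $\leq \|a\|_2^2$ for all pairs precisely when $\Nsf$ is nonexpansive (this is the paper's Proposition~\ref{Prop:AveragedOp}(c), which you re-derive rather than cite); (ii) the algebra, since $1-\alpha = (1-\alpha_1)(1-\alpha_2)/(1-\alpha_1\alpha_2)$ gives $1/c = 1/c_1 + 1/c_2$, hence $c(c_1+c_2)=c_1c_2$ and $(c_1-c)(c_2-c)=c^2$, and also confirms $\alpha\in(0,1)$ since numerator and $1-\alpha$ are both positive; and (iii) the telescoping $(\xbm-\Fsf\xbm)-(\ybm-\Fsf\ybm)=\ubf+\vbf$ together with the cancellation of $\|\pbm-\kbm\|_2^2$ when the two inequalities are added, after which the perfect-square identity $(c_1-c)\|\ubf\|_2^2 - 2c\langle\ubf,\vbf\rangle + (c_2-c)\|\vbf\|_2^2 = \|\sqrt{c_1-c}\,\ubf-\sqrt{c_2-c}\,\vbf\|_2^2$ holds exactly because $\sqrt{(c_1-c)(c_2-c)}=c$. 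Your route is essentially the textbook argument behind the cited result, but your packaging via the harmonic-mean identity $c = c_1c_2/(c_1+c_2)$ buys something the bare citation does not: it makes transparent why the specific formula~\eqref{Eq:Composition} appears (it is the unique constant turning the leftover quadratic form into a perfect square, so this argument cannot certify a smaller $\alpha$). One cosmetic point: the paper's Proposition~\ref{Prop:AveragedOp} is stated for an operator assumed nonexpansive, so when you invoke ``sufficiency'' for $\Fsf$ you should remark that nonexpansiveness of $\Fsf$ is automatic---either as the composition of two nonexpansive maps, or directly because your final inequality subtracts a nonnegative term from $\|\xbm-\ybm\|_2^2$; since your own derivation of the characterization is reversible and does not presuppose nonexpansiveness, this is a wording issue, not a gap.
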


\begin{proof}
See Proposition 4.44 in~\cite{Bauschke.Combettes2010}.
\end{proof}
\noindent
The direct consequence of this theorem, is that the composition of the proximal operator and the gradient-step is also an averaged operator. The following classical result was used in Definition~\ref{Eq:AveragedDefinition} and is central for our subsequent analysis.
\begin{proposition}
\label{Prop:AveragedOp}
For a nonexpansive operator $\Dsf$ and a constant $\alpha \in (0, 1)$, the following are equivalent:
\begin{enumerate}[label=(\alph*)]
\item $\Dsf$ is $\alpha$-averaged.
\item $(1-1/\alpha)\Isf + (1/\alpha)\Dsf$ is nonexpansive.
\item For all $\xbm, \ybm \in \R^n$, we have that
\begin{align*}
\|\Dsf&(\xbm)-\Dsf(\ybm)\|_2^2 \\
&\leq \|\xbm-\ybm\|_2^2 - \left(\frac{1-\alpha}{\alpha}\right)\|\xbm-\Dsf(\xbm)-\ybm+\Dsf(\ybm)\|_2^2
\end{align*}
\end{enumerate}
\end{proposition}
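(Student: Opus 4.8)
The plan is to establish the two equivalences (a)$\Leftrightarrow$(b) and (b)$\Leftrightarrow$(c) separately; the full chain then follows by transitivity. Both reductions are purely algebraic manipulations of squared $\ell_2$-norms in $\R^n$, so nothing beyond the definitions of $\alpha$-averagedness and of nonexpansiveness is required.

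For (a)$\Leftrightarrow$(b), I would argue directly from the definition: $\Dsf$ is $\alpha$-averaged exactly when there is a nonexpansive $\Fsf$ with $\Dsf = (1-\alpha)\Isf + \alpha\Fsf$. Since $\alpha \in (0,1)$ is nonzero, this relation inverts uniquely for $\Fsf$, yielding $\Fsf = (1-1/\alpha)\Isf + (1/\alpha)\Dsf$. Because $\Fsf$ is thereby completely determined by $\Dsf$, the existence of a nonexpansive operator in the averaged decomposition is equivalent to this one specific operator being nonexpansive, which is precisely (b). This step amounts to a change of variables and carries no real difficulty.

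For (b)$\Leftrightarrow$(c), I would fix $\xbm, \ybm \in \R^n$, set $\delta \defn \xbm - \ybm$ and $\eta \defn \Dsf(\xbm) - \Dsf(\ybm)$, and write $\Fsf \defn (1-1/\alpha)\Isf + (1/\alpha)\Dsf$, so that $\Fsf(\xbm) - \Fsf(\ybm) = (1-1/\alpha)\delta + (1/\alpha)\eta$. Expanding the squared norm, the nonexpansiveness condition $\|\Fsf(\xbm)-\Fsf(\ybm)\|_2^2 \leq \|\delta\|_2^2$ reads $(1-1/\alpha)^2\|\delta\|_2^2 + (2/\alpha)(1-1/\alpha)\langle\delta,\eta\rangle + (1/\alpha)^2\|\eta\|_2^2 \leq \|\delta\|_2^2$; multiplying through by $\alpha^2 > 0$ (which preserves the inequality) and collecting terms gives the clean form $\|\eta\|_2^2 - 2(1-\alpha)\langle\delta,\eta\rangle + (1-2\alpha)\|\delta\|_2^2 \leq 0$. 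Independently, I would expand the right-hand side of (c) using $\|\delta - \eta\|_2^2 = \|\delta\|_2^2 - 2\langle\delta,\eta\rangle + \|\eta\|_2^2$, multiply through by $\alpha$, and simplify the coefficients via $1 + (1-\alpha)/\alpha = 1/\alpha$ and $(1-\alpha)/\alpha - 1 = (1-2\alpha)/\alpha$; (c) then collapses to exactly the same inequality. Since both statements reduce, for every pair $\xbm, \ybm$, to this common inequality, they are equivalent.

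The only thing that needs care is the coefficient bookkeeping in the (b)$\Leftrightarrow$(c) step—matching the cross term $\langle\delta,\eta\rangle$ and the coefficient of $\|\delta\|_2^2$ exactly, and invoking $\alpha > 0$ to clear denominators without reversing the inequality. There is no genuine conceptual obstacle: the result is a standard identity from monotone-operator theory, and the proof is complete once both algebraic reductions are seen to land on the single inequality $\|\eta\|_2^2 - 2(1-\alpha)\langle\delta,\eta\rangle + (1-2\alpha)\|\delta\|_2^2 \leq 0$.
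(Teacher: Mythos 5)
Your proof is correct, but it takes a different route from the paper: the paper does not prove this proposition at all, instead citing Proposition~4.35 of Bauschke and Combettes, whereas you give a self-contained algebraic verification. Your two reductions check out exactly. For (a)$\Leftrightarrow$(b), since $\alpha \neq 0$ the averaged decomposition $\Dsf = (1-\alpha)\Isf + \alpha\Fsf$ inverts uniquely to $\Fsf = (1-1/\alpha)\Isf + (1/\alpha)\Dsf$, so the existential quantifier in the definition collapses to a statement about this one operator. For (b)$\Leftrightarrow$(c), with $\delta = \xbm - \ybm$ and $\eta = \Dsf(\xbm) - \Dsf(\ybm)$, multiplying the nonexpansiveness inequality for $\Fsf$ by $\alpha^2$ gives $(1-\alpha)^2\|\delta\|_2^2 - 2(1-\alpha)\langle\delta,\eta\rangle + \|\eta\|_2^2 \leq \alpha^2\|\delta\|_2^2$, and since $(1-\alpha)^2 - \alpha^2 = 1-2\alpha$ this is exactly your common inequality $\|\eta\|_2^2 - 2(1-\alpha)\langle\delta,\eta\rangle + (1-2\alpha)\|\delta\|_2^2 \leq 0$; multiplying (c) through by $\alpha > 0$ and expanding $\|\delta-\eta\|_2^2$ lands on the same expression, so the equivalence holds pointwise in $(\xbm,\ybm)$. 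What each approach buys: the paper's citation is brief and situates the result within the general Hilbert-space theory of averaged operators, while your computation makes the result self-contained, works verbatim in any real inner-product space, and incidentally reveals that the standing hypothesis that $\Dsf$ is nonexpansive is redundant---your argument never uses it, and indeed (c) implies nonexpansiveness since the subtracted term is nonnegative for $\alpha \in (0,1)$.
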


\begin{proof}
See Proposition~4.35 in~\cite{Bauschke.Combettes2010}.
\end{proof}

\subsection{Proof of Proposition~\ref{Prop:Minimizer}}
\label{Sec:MinimizerOfPnP}

Proposition~\ref{Prop:Minimizer} is a direct consequence of the well-known fixed-point interpretation of ISTA (see p.~150 in~\cite{Parikh.Boyd2014}). We provide the proof here for completeness by using the following characterization of the proximal operator
\begin{equation}
\label{Eq:ProximalCharacterization}
\xbm = \prox_{\gamma r}(\zbm) \quad\Leftrightarrow\quad \frac{\zbm-\xbm}{\gamma} \in \partial r(\xbm),
\end{equation}
valid for all $\zbm \in \R^n$, where $\partial r(\xbm)$ is the subdifferential of $r$ at $\xbm$~\cite{Boyd.Vandenberghe2008}.
Let ${\denoise_\sigma(\cdot) = \prox_{\gamma r}(\cdot)}$ and $\xbmast \in \fix(\Psf)$. Then, from~\eqref{Eq:ProximalCharacterization}, we have that
\begin{align*}
&\xbmast = \Psf(\xbmast) = \prox_{\gamma r}(\xbmast - \gamma \nabla d(\xbmast))\\
&\quad\Leftrightarrow\quad -\nabla d(\xbmast) \in \partial r(\xbmast) \\
&\quad\Leftrightarrow\quad \zerobm \in \nabla d(\xbmast) + \partial r(\xbmast),
\end{align*}
which establishes the desired result.

\subsection{Proof of Proposition~\ref{Prop:BatchPnP}}
\label{Sec:BatchPnPConvergence}

As mentioned in Appendix~\ref{Sec:Prelims}, the iterative application of an averaged operator is well known as Krasnosel'skii-Mann iteration~\cite{Mann1953, Kasnoselskii1955} and its convergence has been extensively discussed in literature~\cite{Bauschke.Combettes2010, Ryu.Boyd2016}. Below, we use this theory to establish a novel convergence result for PnP-ISTA.

From our assumptions, the denoiser $\Dsf_\sigma$ is $\theta$-averaged and the gradient-step operator $\Gsf_\gamma$ is $(\gamma L/2)$-averaged for any $\gamma \in (0, 2/L)$. From Proposition~\ref{Thm:Composition}, we have that their composition ${\Psf = \Dsf_\sigma \circ \Gsf_\gamma}$ is
$$\alpha = \frac{\theta + \frac{\gamma L}{2} - \theta \gamma L}{1-\frac{\theta\gamma L}{2}}$$
averaged. Consider a single iteration ${\xbm^+ = \Psf(\xbm)}$, then we have for any $\xbmast \in \fix(\Psf)$ that
\begin{align*}
\|&\xbm^+ - \xbmast\|_2^2 = \|\Psf(\xbm)-\Psf(\xbmast)\|_2^2 \\
&\leq \|\xbm-\xbmast\|_2^2 - \left(\frac{1-\alpha}{\alpha}\right)\|\xbm-\Psf(\xbm)-\xbmast+\Psf(\xbmast)\|_2^2 \\
&= \|\xbm-\xbmast\|_2^2 - \left(\frac{1-\alpha}{\alpha}\right)\|\xbm-\Psf(\xbm)\|_2^2,
\end{align*}
where we used Proposition~\ref{Prop:AveragedOp}(c) and the fact that $\xbmast = \Psf(\xbmast)$. By considering the iteration $k \geq 1$ and rearranging the terms, we obtain
\begin{align*}
\|\xbm^{k-1}&-\Psf(\xbm^{k-1})\|_2^2 \\
&\leq \left(\frac{\alpha}{1-\alpha}\right) \left[\|\xbm^{k-1}-\xbmast\|_2^2 - \|\xbm^k-\xbmast\|_2^2\right].
\end{align*}
By averaging this inequality over $t \geq 1$ iterations and dropping the last term $\|\xbm^t-\xbmast\|_2^2$, we obtain
$$\frac{1}{t}\sum_{k = 1}^t \|\xbm^{k-1}-\Psf(\xbm^{k-1})\|_2^2 \leq \frac{1}{t}\left(\frac{\alpha}{1-\alpha}\right)\|\xbm^0-\xbmast\|_2^2.$$
To obtain the result that depends on $\theta \in (0, 1)$, we note that for any $\gamma \in (0, 1/L]$, we can write
\begin{align}
\label{Eq:ConstantBound}
\frac{\alpha}{1-\alpha} = \frac{\theta + \frac{\gamma L}{2}-\theta \gamma L}{(1-\theta)(1-\frac{\gamma L}{2})} \leq \frac{\theta + \frac{1}{2}}{\frac{1-\theta}{2}} \leq 2 \left(\frac{1+\theta}{1-\theta}\right).
\end{align}
This establishes the desired result.

\subsection{Proof of Proposition~\ref{Prop:SameFix}}
\label{Sec:SameFixedPointProof}

Proposition~\ref{Prop:SameFix} is a variation of the result in~\cite{Meinhardt.etal2017}. For completeness, we provide a proof based on the fixed-point interpretation of ADMM (see p.~157 in~\cite{Parikh.Boyd2014}). 

First note that both $\Dsf_\sigma$ and $\prox_{\gamma d}$ are continuous (since they are nonexpansive). Fixed points $\xbmast, \zbmast, \sbmast$ of PnP-ADMM satisfy
\begin{subequations}
\begin{align}
\label{Eq:PnPADMM1}&\zbmast = \prox_{\gamma d}(\xbmast - \sbmast) \\
\label{Eq:PnPADMM2}&\xbmast = \Dsf_\sigma(\zbmast + \sbmast) \\
\label{Eq:PnPADMM3}&\sbmast = \sbmast + \zbmast - \xbmast.
\end{align}
\end{subequations}
From~\eqref{Eq:PnPADMM3}, we conclude that $\zbmast = \xbmast$. By using the smoothness of $d$ and the characterization~\eqref{Eq:ProximalCharacterization} in~\eqref{Eq:PnPADMM1}, we obtain
$$\xbmast - \sbmast -\zbmast = \gamma \nabla d(\zbmast) \quad\Rightarrow\quad \sbmast = -\gamma \nabla d(\xbmast).$$
Finally, by using this in~\eqref{Eq:PnPADMM2}, we obtain
$$\xbmast = \Dsf_\sigma(\xbmast - \gamma \nabla d(\xbmast)) = \Psf(\xbmast),$$
which means that $\xbmast = \zbmast \in \fix(\Psf)$ and completes the proof.

\subsection{Proof of Proposition~\ref{Prop:Divergence}}
\label{Sec:Divergence}

We prove by providing a specific counter example. For simplicity, we assume $n = 1$, but the same example can be generalized for any $n \in \N$. Consider the data fidelity given by the Huber function
\begin{equation}
d(x) \defn 
\begin{cases} 
\frac{1}{2}x^2 & \text{if } |x| \leq 1 \\
|x|-\frac{1}{2}       & \text{if } |x| > 1
\end{cases}.
\end{equation}
This function is convex and has a Lipschitz continuous gradient with constant $L = 1$
\begin{equation}
d'(x) = 
\begin{cases} 
x & \text{if } |x| \leq 1 \\
\sgn(x)       & \text{if } |x| > 1
\end{cases},
\end{equation}
where $\sgn(\cdot)$ denotes the sign function. We also consider the denoiser defined as
\begin{equation}
\Dsf_\sigma(z) \defn z + \sigma \sqrt{c} \,\sgn(z),
\end{equation}
where $c > 0$ is some constant independent of $\sigma > 0$. Since
\begin{equation}
|\Dsf_\sigma(x) - x|^2 = \sigma^2 \, c,
\end{equation}
this denoiser satisfies the definition of boundedness in~\eqref{Eq:BoundedDenoiser}.
Then, for $q_k = 1$, a single iteration of PnP-ISTA can be re-written as
\begin{align*}
&x = \Dsf_\sigma(z) = z + \sigma\sqrt{c} \,\sgn(z) \\
&z^+ = x - \gamma d'(x) =
\begin{cases} 
(1-\gamma)x & \text{if } |x| \leq 1 \\
x-\gamma\sgn(x)       & \text{if } |x| > 1
\end{cases},
\end{align*}
where we assume any $\gamma \in (0, 1)$. By combining these equations, we obtain
\begin{equation*}
z^+ = 
\begin{cases} 
(1-\gamma)(|z|+\sigma \sqrt{c})\sgn(z) & \text{if } |z| \leq 1-\sigma \sqrt{c} \\
(|z| + \sigma \sqrt{c} - \gamma) \sgn(z)       & \text{if } |z| > 1-\sigma \sqrt{c},
\end{cases}
\end{equation*}
where we used the fact that $\sgn(x) = \sgn(z)$ and expressed ${z = |z| \, \sgn(z)}$. For $|z| \leq 1-\sigma\sqrt{c}$, we have that
\begin{align*}
|z^+| &= (1-\gamma)(|z|+\sigma\,\sqrt{c}) \\
&= |z| +\sigma \, \sqrt{c} - \gamma |z| - \gamma \sigma \, \sqrt{c} \\
&\geq |z| + \sigma \, \sqrt{c} - \gamma (1- \sigma \, \sqrt{c}) - \gamma \sigma \, \sqrt{c} \\
&= |z| + \sigma\, \sqrt{c} - \gamma.
\end{align*}
On the other hand, for $|z| > 1- \sigma \, \sqrt{c}$, we have that
$$|z^+| = |z| + \sigma \, \sqrt{c} - \gamma.$$
This means that the iterates of PnP-ISTA satisfy
$$|z^t| \geq  |z^0| + t(\sigma \, \sqrt{c} - \gamma), \quad\forall t \in \N.$$
Therefore, for any $\sigma > \gamma/\sqrt{c}$ and any $z^0 \in \R$, the sequence $\{z^t\}_{t \in \N}$ generated by PnP-ISTA diverges. Since the denoiser is bounded, this implies that the sequence $\{x^t\}_{t \in \N}$ also diverges. This completes the proof.

\subsection{Proof of Proposition~\ref{Prop:StochPnP}}
\label{Sec:StochPnPConvergence}

We define the full proximal-gradient operator
\begin{equation}
\Psf(\xbm) \defn \Dsf_\sigma(\xbm-\gamma \nabla d(\xbm))
\end{equation}
and its online variant over a minibatch of size ${B \geq 1}$
\begin{equation}
\Psfhat(\xbm) \defn \Dsf_\sigma(\xbm-\gamma \nablahat d(\xbm)),
\end{equation}
where $\nablahat d$ denotes the minibatch gradient. The variance bound in Assumption~\ref{As:Assumption2}(d) implies that for all $\xbm \in \R^n$, we have that
\begin{align}
\label{Eq:ProxDenVariance}
\nonumber\E&\left[\|\Psf(\xbm)-\Psfhat(\xbm)\|_2^2\right] \\
\nonumber&= \E\left[\|\Dsf_\sigma(\xbm-\gamma \nabla d(\xbm))-\Dsf_\sigma(\xbm-\gamma \nablahat d(\xbm))\|_2^2\right] \\
\nonumber&\leq \E\left[\|\xbm-\gamma \nabla d(\xbm)-\xbm+\gamma \nablahat d(\xbm)\|_2^2\right] \\
&\leq \gamma^2 \E\left[\|\nabla d(\xbm)-\nablahat d(\xbm)\|_2^2\right] \leq \frac{\gamma^2 \nu^2}{B},
\end{align}
where in the third row we used the nonexpansiveness of $\Dsf_\sigma$. Consider a single iteration $\xbm^k = \Psfhat(\xbm^{k-1})$, then we have for any $\xbmast \in \fix(\Psf)$ that
\begin{align}
\label{Eq:FistStochBound}
\nonumber&\|\xbm^k - \xbmast\|_2^2 = \|\Psfhat(\xbm^{k-1})-\Psf(\xbm^{k-1})+\Psf(\xbm^{k-1})-\Psf(\xbmast)\|_2^2 \\
&= \|\Psf(\xbm^{k-1})-\Psf(\xbmast)\|_2^2 + \|\Psfhat(\xbm^{k-1})-\Psf(\xbm^{k-1})\|_2^2 \\
\nonumber&\quad\quad + 2(\Psfhat(\xbm^{k-1})-\Psf(\xbm^{k-1}))^\Tsf(\Psf(\xbm^{k-1})-\Psf(\xbmast)) \\
\nonumber&\leq \|\xbm^{k-1}-\xbmast\|_2^2 - \left(\frac{1-\alpha}{\alpha}\right)\|\xbm^{k-1}-\Psf(\xbm^{k-1})\|_2^2 \\
\nonumber&\quad\quad + \|\Psfhat(\xbm^{k-1})-\Psf(\xbm^{k-1})\|_2^2 \\
\nonumber&\quad\quad + 2\|\Psfhat(\xbm^{k-1})-\Psf(\xbm^{k-1})\|_2 \cdot \|\Psf(\xbm^{k-1})-\Psf(\xbmast)\|_2,
\end{align}
where we used Proposition~\ref{Prop:AveragedOp}(c) and the Cauchy-Schwarz inequality. Note that due to nonexpansiveness of the operator $\Psf$, we have that
\begin{equation}
\label{Eq:NonexpansiveFromInitial}
\|\Psf(\xbm^{k-1})-\Psf(\xbmast)\|_2 \leq \|\xbm^{k-1}-\xbmast\|_2 \leq \|\xbm^0-\xbmast\|_2.
\end{equation}
Additionally, by applying Jensen's inequality to~\eqref{Eq:ProxDenVariance}, we conclude that for all $\xbm \in \R^n$
\begin{align}
\label{Eq:JensenSimplification}
\E&\left[\|\Psf(\xbm)-\Psfhat(\xbm)\|_2\right] = \E\left[\sqrt{\|\Psf(\xbm)-\Psfhat(\xbm)\|_2^2}\right] \\
&\leq \sqrt{\E\left[\|\Psf(\xbm)-\Psfhat(\xbm)\|_2^2\right]} \leq \frac{\gamma \nu}{\sqrt{B}}.
\end{align}
By taking a conditional expectation of~\eqref{Eq:FistStochBound} and using these bounds, we obtain
\begin{align*}
\E&\left[\|\xbm^k-\xbmast\|_2^2 - \|\xbm^{k-1}-\xbmast\|_2^2 \mid \xbm^{k-1}\right] \\
\nonumber&\leq \left(\frac{\alpha - 1}{\alpha}\right)\|\xbm^{k-1}-\Psf(\xbm^{k-1})\|_2^2 \\
\nonumber&\quad\quad+ \frac{2\gamma \nu}{\sqrt{B}}\|\xbm^0-\xbmast\|_2 + \frac{\gamma^2 \nu^2}{B},
\end{align*}
which can be rearanged into
\begin{align*}
&\|\xbm^{k-1}-\Psf(\xbm^{t-1})\|_2^2 \\
&\leq \left(\frac{\alpha}{1-\alpha}\right)\Big[\frac{\gamma^2\nu^2}{B} + \frac{2\gamma\nu}{\sqrt{B}}\|\xbm^0-\xbmast\|_2  \\
&\quad\quad+\E\left[\|\xbm^{k-1}-\xbmast\|_2^2 - \|\xbm^k-\xbmast\|_2^2 \mid \xbm^{k-1}\right]\Big].
\end{align*}
By averaging the inequality over $t \geq 1$ iterations, taking the total expectation, and dropping the last term, we obtain
\begin{align*}
\E&\left[\frac{1}{t}\sum_{k = 1}^t \|\xbm^{k-1}-\Psf(\xbm^{k-1})\|_2^2\right] \\
&\leq \frac{\alpha}{1-\alpha} \left[\frac{\gamma^2 \nu^2}{B} + \frac{2\gamma \nu }{\sqrt{B}}\|\xbm^0-\xbmast\|_2 + \frac{\|\xbm^0-\xbmast\|_2^2}{t}\right],
\end{align*}
where we used the law of total expectation.
By using the inequality~\eqref{Eq:ConstantBound}, we can rewrite this expression as
\begin{align*}
\E&\left[\frac{1}{t}\sum_{k = 1}^t \|\xbm^{k-1}-\Psf(\xbm^{k-1})\|_2^2\right] \\
&\leq 2\left(\frac{1+\theta}{1-\theta}\right) \left[\frac{\gamma^2\nu^2}{B} + \frac{2\gamma \nu}{\sqrt{B}}\|\xbm^0-\xbmast\|_2 + \frac{\|\xbm^0-\xbmast\|_2^2}{t}\right]
\end{align*}
Note that to obtain the results in Corollary~\ref{Cor:StochPnP}, simply replace given values for $\gamma$ and $B$ into the inequality, and use the following bounds that are valid for any $t \in \N$
$$\frac{1}{t}\leq \frac{1}{\sqrt{t}} \quad\text{and}\quad \frac{1}{t^2} \leq \frac{1}{t}.$$
This establishes the desired results.


\bibliographystyle{IEEEtran}

\end{document}